\documentclass{article} 

\usepackage{xcolor}  
\usepackage[breaklinks=true,colorlinks]{hyperref}
\hypersetup{
  linkcolor={red!50!black},
  citecolor={blue!50!black},
}   
\usepackage{url}

\usepackage[utf8]{inputenc} 
\usepackage[truedimen,margin=20mm]{geometry} 

\usepackage[T1]{fontenc}    
\usepackage{url}            
\usepackage{booktabs}       
\usepackage{amsfonts}       
\usepackage{nicefrac}       
\usepackage{microtype}      
\usepackage[pdftex]{graphicx}
\usepackage{natbib} 
\usepackage{algorithm} 
\usepackage{algorithmic} 
\usepackage{multirow}
\usepackage{amsmath}
\usepackage{amssymb}
\usepackage{amsthm}
\usepackage{here}
\usepackage{wrapfig}
\usepackage{centernot}
\usepackage{enumitem}
\usepackage{comment}
\usepackage{subfig}
\usepackage{bbm}
\usepackage{titlesec}
\usepackage{abstract} 
\usepackage[capitalize,noabbrev]{cleveref}

\allowdisplaybreaks[1]
\sloppy

\titleformat*{\section}{\large\bfseries}

\setlength{\parskip}{2.5mm}
\setlength{\columnsep}{10mm}
\setlength{\parindent}{0mm}

\newtheorem{theorem}{Theorem}
\newtheorem{lemma}{Lemma}
\newtheorem{proposition}{Proposition}
\newtheorem{definition}{Definition}
\newtheorem{corollary}{Corollary}
\newtheorem{assumption}{Assumption}
\newtheorem{example}{Example}
\theoremstyle{remark}




\newcommand{\defeq}{\overset{\mathrm{def}}{=}}

\def\pd<#1>{\left\langle #1 \right\rangle}
\def\floor[#1]{\left\lfloor #1 \right\rfloor}
\def\ceil[#1]{\left\lceil #1 \right\rceil}

\newcommand{\rd}{\mathrm{d}}


\newcommand{\bB}{\mathbb{B}}
\newcommand{\bE}{\mathbb{E}}

\newcommand{\bR}{\mathbb{R}}


\newcommand{\cD}{\mathcal{D}}

\newcommand{\cL}{\mathcal{L}}
\newcommand{\cN}{\mathcal{N}}
\newcommand{\cP}{\mathcal{P}}
\newcommand{\cX}{\mathcal{X}}


\newcommand{\KL}{\mathrm{KL}}
\newcommand{\Ent}{\mathrm{Ent}}


\title{\Large Primal and Dual Analysis of Entropic Fictitious Play for Finite-sum Problems}

\author{Atsushi Nitanda$^{1,2}$, Kazusato Oko$^{3,2}$, Denny Wu$^{4,5}$, \\Nobuhito Takenouchi$^{1}$, Taiji Suzuki$^{3,2}$
\vspace{2mm}\\
\normalsize{\textit{$^1$Kyushu Institute of Technology}},
\normalsize{\textit{$^2$RIKEN Center for Advanced Intelligence Project}} \\
\normalsize{\textit{$^3$The University of Tokyo}},
\normalsize{\textit{$^4$The University of Toronto}},
\normalsize{\textit{$^5$Vector Institute for Artificial Intelligence}} \\
}

\date{}

\begin{document}
\twocolumn
\maketitle

\begin{abstract}
The entropic fictitious play (EFP) is a recently proposed algorithm that minimizes the sum of a convex functional and entropy in the space of measures --- such an objective naturally arises in the optimization of a two-layer neural network in the mean-field regime. In this work, we provide a concise primal-dual analysis of EFP in the setting where the learning problem exhibits a finite-sum structure. We establish quantitative global convergence guarantees for both the continuous-time and discrete-time dynamics based on properties of a \textit{proximal Gibbs measure} introduced in \citet{nitanda2022convex}. Furthermore, our primal-dual framework entails a memory-efficient particle-based implementation of the EFP update, and also suggests a connection to gradient boosting methods. We illustrate the efficiency of our novel implementation in experiments including neural network optimization and image synthesis. 
\end{abstract}
\section{Introduction} 

In this work we consider the optimization of an entropy-regularized convex functional in the space of measures: 
\begin{equation}
    \min_{\mu \in \cP_2} 
    \left\{ 
    F(\mu) + \lambda \Ent(\mu)
    \right\}.
    \label{eq:objective_intro}
\end{equation}
Note that when $F$ is a linear functional, i.e., $F(\mu) = \int f\rd\mu$, then the above objective admits a unique minimizer $\mu_*\propto\exp(-\lambda^{-1}f)$, samples from which can be obtained using various methods such as Langevin Monte Carlo. 
In the more general setting of nonlinear $F$, efficiently optimizing \eqref{eq:objective_intro} becomes more challenging, and existing algorithms typically take the form of an \textit{interacting-particle} update. 

Solving the objective \eqref{eq:objective_intro} is closely related to the optimization of neural networks in the \textit{mean-field regime}, which attracts attention because it captures the representation learning capacity of neural networks \citep{ghorbani2019limitations,li2020learning,abbe2022merged,ba2022high}. A key ingredient of the mean-field analysis is the connection between gradient descent on the wide neural network and the Wasserstein gradient flow in the space of measures, based on which the global convergence of training dynamics can be shown by exploiting the convexity of the loss function \citep{nitanda2017stochastic,mei2018mean,chizat2018global,rotskoff2018parameters,sirignano2020mean}. However, most of these existing analyses do not prove a convergence rate for the studied dynamics. 

The entropy term in \eqref{eq:objective_intro} leads to a \textit{noisy} gradient descent update, where the injected Gaussian noise encourages exploration \citep{mei2018mean,hu2019mean}. 
Recent works have shown that this added regularization entails exponential convergence of the continuous dynamics (termed the mean-field Langevin dynamics) under a \textit{logarithmic Sobolev inequality} condition that is easily verified in regularized empirical risk minimization problems with neural networks \citep{nitanda2022convex,chizat2022mean,chen2022uniform,suzuki2023uniformintime}. 
Moreover, novel update rules that optimize \eqref{eq:objective_intro} have also been proposed in \citet{nitanda2020particle,oko2022psdca,nishikawa2022}, for which quantitative convergence guarantees can be shown by adapting classical convex optimization theory into the space of measures. 

\begin{figure*}[t]
\begin{center}
\centering
\begin{minipage}[t]{0.68\linewidth} 
{\includegraphics[width=1\linewidth]{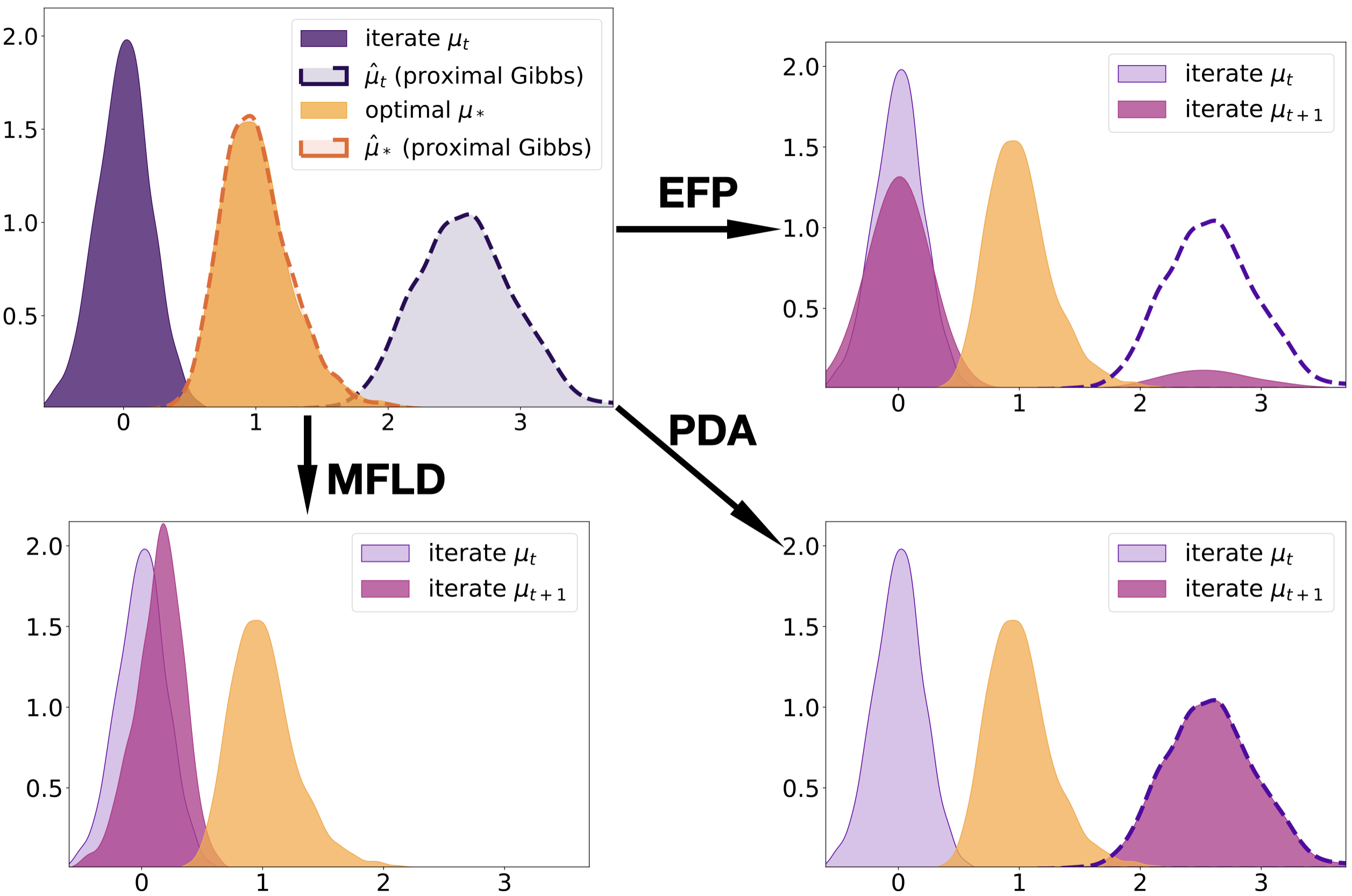}}  
\end{minipage}
 \vspace{-1.5mm}
\caption{\small 1D visualization of the parameter distribution of mean-field two-layer neural network (tanh) optimized by the mean-field Langevin dynamics (MFLD), particle dual averaging\protect\footnotemark (PDA), and entropic fictitious play (EFP), all of which can solve the self-consistent equation $\mu=\hat{\mu}$, where $\hat{\mu}$ is the \textit{proximal Gibbs measure} associated with $\mu$. We set $\lambda=\lambda'=10^{-2}$.
}
\label{fig:main} 
\end{center}
\vspace{-2mm}
\end{figure*} 

An important observation behind the design of these new algorithms is a \textit{self-consistent} condition of the global optimum \citep{hu2019mean}, namely, the optimal solution to the objective \eqref{eq:objective_intro} must satisfy
\begin{align}
\mu = \hat{\mu}, \quad\text{where~} \hat{\mu}(x) \propto \exp\left(-\frac{1}{\lambda}\frac{\delta F(\mu)}{\delta\mu}(x)\right), 
\label{eq:self-consistent}
\end{align}
in which $\frac{\delta F}{\delta\mu}$ denotes the first-variation of $F$. Following \citet{nitanda2022convex}, we refer to $\hat{\mu}$ as the \textit{proximal Gibbs distribution}. 
Based on properties of $\hat{\mu}$, \citet{oko2022psdca} and \citet{nitanda2022convex} established a primal-dual formulation of the optimization problem in the finite-sum setting.

In this work, we analyze a different update rule that optimizes \eqref{eq:objective_intro} recently proposed in \citet{ren2022entropic} termed the \textit{entropic fictitious play} (EFP).  
The EFP update is inspired by the classical fictitious play in game theory for learning Nash equilibria \citep{brown1951iterative}, which has been revisited by \cite{cardaliaguet2017learning,hadikhanloo2019finite,perrin2020fictitious,lavigne2022generalized} in the context of mean-field games. 
Intuitively, the EFP method successively accumulates $\hat{\mu} - \mu$ to the current iteration $\mu$ with an appropriate weight, as illustrated in Figure~\ref{fig:main}. In light of the self-consistent condition \eqref{eq:self-consistent}, this iteration is quite intuitive and natural, as it can be interpreted as a fixed-point iteration on the optimality condition $\mu = \hat{\mu}$. 

Despite its simple structure, EFP can be computationally prohibitive if implemented in a naive way. Specifically, for approximately optimizing mean-field models, we typically utilize a finite-particle discretization of the probability distribution. Then, the additive structure of EFP update requires additional particles to be added at each iteration. Therefore, the number of particles linearly increases in proportion to the number of steps, and nontrivial heuristics are needed to reduce the computational costs in general settings.

\footnotetext{For illustrative purposes, we simplify the PDA implementation from \citet{nitanda2020particle} by removing the weighted averaging in the gradient. }

\subsection{Our Contributions}
In this paper, we present a concise primal-dual analysis of EFP in the finite-sum setting (e.g., empirical risk minimization). 
We establish convergence in both continuous- and discrete-time of the primal and dual objectives, which is equivalent to the convergence of the Kullback-Leibler ($\KL$) divergence between $\mu$ and $\hat{\mu}$. Furthermore, our framework naturally suggests an efficient implementation of EFP for finite-sum problems, where the memory consumption remains constant across the iterations. 
Our contributions can be summarized as follows. 


\begin{itemize}[itemsep=0mm,leftmargin=*,topsep=0mm]
    \item We provide a simple analysis of EFP for finite-sum problems through the primal-dual framework. Our global convergence result holds in both continuous-and discrete-time settings (Theorem \ref{theorem:dual-convergence} and \ref{theorem:dual-convergence-discrete}). 
    \item Our theory suggests a memory-efficient implementation of EFP (Algorithm \ref{alg:implementable-efp}). Specifically, we do not memorize all particles from the previous iterations, but instead store the information via the  proximal Gibbs distribution, with memory cost that only depends on the number of training samples but not the number of iterations. 
    \item We present a connection between EFP and gradient-boosting methods, which enables us to establish discrete-time global convergence based on the Frank-Wolfe argument (Theorem \ref{theorem:fw-convergence}). In other words, in combination with the above algorithmic benefit in the finite-sum setting, EFP can be regarded as a memory-efficient version of the gradient-boosting method. 
    \item We employ our novel implementation of EFP in various experiments including neural network learning and image synthesis \citep{tian2022modern}.
\end{itemize}


\subsection{Notations}
$\|\cdot\|_2,~\|\cdot\|_\infty$ denote the Euclidean norm and uniform norm, respectively.
Given a probability distribution $\mu$ on $\bR^d$, we write the expectation w.r.t.~$\theta \sim \mu$ as $\bE_{\theta \sim \mu}[\cdot]$ or simply $\bE_{\mu}[\cdot]$, $\bE[\cdot]$ when the random variable and distribution are obvious from the context; e.g.~for a function $f: \bR^d \rightarrow \bR$, we write $\bE_\mu[f] = \int f(\theta) \rd \mu(\theta)$ when $f$ is integrable. 
$\KL$ stands for the Kullback-Leibler divergence:
$\KL(\mu\|\mu') \defeq \int \rd\mu(\theta) \log \frac{\rd \mu}{\rd \mu'}(\theta)$ and $\Ent$ stands for the negative entropy: $\Ent(\mu) = \int \rd\mu(\theta) \log \frac{\rd \mu}{\rd \theta}(\theta)$. 
Let $\cP_2$ be the space of probability distributions on $\bR^d$ such that absolutely continuous with respect to Lebesgue measure and the entropy and second moment are well-defined.

\section{Preliminaries}
\subsection{Problem Setup}
We say the functional $G:\cP_2 \to \bR$ is differentiable and convex when $G$ satisfies the following.
First, there exists a functional (referred to as a first variation): $\frac{\delta G}{\delta \mu}:~\cP_2 \times \bR^d \ni (\mu,\theta) \mapsto \frac{\delta G}{\delta \mu}(\mu)(\theta) \in \bR$ such that for any $\mu, \mu' \in \cP_2$,
\[ \left.\frac{\rd G (\mu+\epsilon (\mu'-\mu))}{\rd\epsilon} \right|_{\epsilon=0} 
= \int  \frac{\delta G}{\delta \mu}(\mu)(\theta) \rd(\mu'-\mu)(\theta) \]
and $G$ satisfies the convexity condition:
\begin{equation}\label{eq:convexity}
G(\mu') \geq G(\mu) + \int \frac{\delta G}{\delta \mu}(\mu)(\theta) \rd (\mu'-\mu)(\theta).      
\end{equation}

Let $F_0 :\cP_2 \to \bR$ be differentiable and convex, and define $F(\mu) = F_0(\mu) + \lambda'\bE_{\mu}[\|\theta\|_2^2]$.
We consider the minimization of an entropy-regularized convex functional:
\begin{equation}\label{prob:org}
    \min_{\mu \in \cP_2} 
    \left\{ 
    \cL(\mu) = F(\mu) + \lambda \Ent(\mu)
    \right\}.
\end{equation}

Note both $F$ and $\cL$ are also differentiable convex functionals.
Throughout the paper, we make the following regularity assumption on $F_0$.
\begin{assumption}\label{assumption:regularity}
We assume the first variation $\frac{\delta F_0}{\delta \mu}$ satisfies the Lipschitz continuity and boundedness in the following sense: there exists constant $M>0$ such that for any $\mu,\mu' \in \cP_2$ and for any $\theta,\theta' \in \bR^d$,
\begin{align*}
  \left| \frac{\delta F_0}{\delta \mu}(\mu)(\theta) - \frac{\delta F_0}{\delta \mu}(\mu)(\theta') \right| 
  &\leq M (W_2(\mu,\mu') + \|\theta - \theta'\|_2), \\
  \left| \frac{\delta F_0}{\delta \mu}(\mu)(\theta) \right| 
  &\leq M,
\end{align*}
where $W_2$ is the $2$-Wasserstein distance.
\end{assumption}

For $\mu \in \cP_2$, we introduce the {\it proximal Gibbs distribution}, which plays a key role in our analysis (for basic properties see \citet{nitanda2022convex,chizat2022mean}).
\begin{definition}[Proximal Gibbs distribution]
    We define $\hat{\mu}$ be the Gibbs distribution so that 
    \begin{equation}\label{eq:proximal-Gibbs}
        \frac{\rd\hat{\mu}}{\rd \theta}(\theta) = \frac{\exp\left( - \frac{1}{\lambda} \frac{\delta F(\mu)}{\delta \mu}(\theta) \right)}{Z(\mu)},
    \end{equation}
    where $Z(\mu)$ is the normalization constant and $\rd\hat{\mu} /\rd \theta$ represents the density function w.r.t.~Lebesgue measure.
\end{definition}

\remark{
Under Assumption \ref{assumption:regularity}, the existence and uniqueness of the minimizer $\mu_* \in \cP_2$ of $\cL$ is guaranteed by \citet{ren2022entropic}, and $\mu_*$ satisfies the first-order optimality condition: $\mu_* = \hat{\mu}_*$ \citep{hu2019mean}. 
}

We focus on the finite-sum setting as follows. Given differentiable convex functions: $\ell_i: \bR \rightarrow \bR$ and functions $h_i: \bR^d \rightarrow \bR$ ($i \in \{1,\ldots,n\}$), we consider the following minimization problem:
\begin{equation}\label{prob:erm-primal}
    \min_{\mu \in \cP_2} \left\{ \cL(\mu) = \frac{1}{n}\sum_{i=1}^n \ell_i( \bE[ h_i ] ) 
    + \lambda' \bE[ \|\theta\|_2^2 ] 
    + \lambda \Ent(\mu) \right\},   
\end{equation}
where the expectation is taken with respect to $\theta\sim \mu$, i.e., $\bE[ h_i ]=\bE_{\theta \sim \mu}[h_i(\theta)]$.
Note that this problem is a special case of (\ref{prob:org}) by setting $F_0(\mu)=\frac{1}{n}\sum_{i=1}^n \ell_i( \bE_{\mu}[h_i(\theta)] )$. 

\subsection{Example Applications}

One noticeable example of problem \eqref{prob:erm-primal} is the learning of mean-field neural networks via empirical risk minimization.
\begin{example}[Mean-field model]\label{example:mean-field-model}
Let $h(\theta,\cdot): \cX \rightarrow \bR$ be a function parameterized by $\theta \in \bR^d$, where $\cX$ is the data space.
The mean-field model is an integration of $h(\theta,\cdot)$ with respect to the distribution $\mu \in \cP_2$ over the parameter space: $\bE_{\theta \sim \mu}[ h(\theta,\cdot)]$. 
Given training examples $\{(x_i,y_i)\}_{i=1}^n \subset \cX \times \bR$, denote $h_i(\theta)$ as the output $h(\theta,x_i)$. Then we may choose the loss term as $\ell_i(\bE_{\mu}[h_i(\theta)]) = \ell( y_i, \bE[ h(\theta,x_i)])$, for convex loss functions such as the squared loss $\ell(y,y') = 0.5(y-y')^2$ and the logistic loss $\log(1+\exp(-yy'))$. 
\end{example}

Another example that falls into the framework of (\ref{prob:erm-primal}) is density estimation using a mixture model.
\begin{example}[Density Estimation]
Let $\mu * g_\delta$ ($\mu \in \cP_2$) be the Gaussian convolution of $\mu$:
\begin{equation*}
  \mu * g_\delta(\cdot) 
  \propto \int \exp\left( - \frac{\| \theta-\cdot\|_2^2}{2\sigma^2} \right) \rd \mu(\theta),
\end{equation*}
where $g_\delta(\theta) = (2\pi \sigma^2)^{-d/2} \exp( - \|\theta^2\|/(2\sigma^2))$.
Let $\{ \zeta_i \}_{i=1}^n \subset \bR^d$ be the i.i.d.~observations. Then, the log-likelihood $- \sum_{i=1}^n \log (\mu * g_\delta(\zeta_i))$ can be written as $\sum_{i=1}^n \ell_i(\bE_{\mu}[h_i])$, by setting $\ell_i = -\log$ and $h_i(\theta) = g_\delta(\theta - \zeta_i)$ since $\mu*g_\delta(\zeta_i) = \bE_{\theta \sim \mu}[g_\delta (\theta - \zeta_i)]$.
\end{example}

\section{Entropic Fictitious Play}
\subsection{The Ideal Update}
The entropic fictitious play (EFP) algorithm \citep{ren2022entropic} is the optimization method that minimizes the objective \eqref{prob:org}. 
The continuous evolution $\{\mu_t\}_{t\geq 0} \subset \cP_2$ of the EFP is defined as follows: for $\gamma > 0$,
\begin{equation}\label{eq:efp}
    \frac{\rd\mu_t}{\rd t} = \gamma ( \hat{\mu}_t - \mu_t).
\end{equation}
Under Assumption \ref{assumption:regularity}, the existence of differentiable density functions that solve (\ref{eq:efp}) is guaranteed. 

The time-discretized version of the EFP can be obtained by the explicit Euler's method. Given a step-size $\eta>0$,
\begin{equation}\label{eq:discrete-time-efp}
    \mu_{t+\eta} = (1-\eta\gamma)\mu_t + \eta \gamma \hat{\mu}_t.
\end{equation}
To execute this update, we need to compute the proximal Gibbs distribution $\hat{\mu}_t$, which can be accessed via standard sampling algorithms such as Langevin Monte Carlo. This ideal update is summarized in Algorithm \ref{alg:discrete-time-efp} below.
\begin{algorithm}[ht]
  \caption{Discrete-time Entropic Fictitious Play}
  \label{alg:discrete-time-efp}
\begin{algorithmic}
  \STATE {\bfseries Input:}
  $\mu^{(0)},~T,~\eta,~\gamma$
\vspace{1mm}
   \FOR{$t=0$ {\bfseries to} $T-1$}
   \STATE Compute $\hat{\mu}^{(t)}$ via standard sampling algorithms
   \STATE Update $\mu^{(t+1)} = (1-\eta\gamma) \mu^{(t)} + \eta \gamma \hat{\mu}^{(t)}$
   \ENDFOR
\vspace{1mm}   
\STATE Return $\mu^{(T)}$
\end{algorithmic}
\end{algorithm}

As previously remarked, in light of the first-order optimality condition $\mu_* = \hat{\mu}_*$, the discrete-time EFP (\ref{eq:discrete-time-efp}) is an intuitive update that mirrors the fixed-point iteration methods. 
However, the naive particle implementation of (\ref{eq:discrete-time-efp}) would be extremely expensive in terms of memory cost. 
In particular, the additive structure of the discrete-time EFP update requires us to store all the past particles along the trajectory. In other words, the memory cost scales linearly with the number of iterations. This prohibits the practical use of EFP beyond very short time horizon. 

\subsection{Efficient Particle Update for Finite-sum Settings}\label{subsec:algorithmic-advantage}
We now present an efficient implementation of discrete-time EFP by exploiting the finite-sum structure as in \citet{nitanda2020particle,oko2022psdca}. 
For comparison, we first provide a recap for the naive implementation.
To execute Algorithm \ref{alg:discrete-time-efp}, we run a sampling algorithm for each Gibbs distribution $\hat{\mu}^{(t)}$, and build an approximation $\hat{\mu}^{(t)} \sim \frac{1}{m} \sum_{r=1}^m \delta_{\theta_r^{(t)}}$ by finite particles $\theta_r^{(t)} \sim \hat{\mu}^{(t)}~(r=1,2,\ldots,m)$, where $\delta_\theta$ is the Dirac measure at $\theta \in \bR^d$. Then, the approximation $\underline{\mu}^{(t+1)}$ to $\mu^{(t+1)}$ can be recursively constructed as follows:
\begin{equation}\label{eq:additive-update}
 \underline{\mu}^{(t+1)} = (1-\eta\gamma) \underline{\mu}^{(t)} + \frac{\eta \gamma}{m}\sum_{r=1}^m \delta_{\theta_r^{(t)}}. 
\end{equation} 
It is clear that $\underline{\mu}^{(t)}$ consists of all particles obtained up to the $t$-th iteration, and thus the number of particles accumulates linearly with respect to the number of iterations.

\paragraph{Alternative implementation.} 
Our starting observation is that sampling algorithms for the Gibbs distribution such as Langevin Monte Carlo only require the computation of the gradient of the potential function. Recall that the potential function $\nabla \frac{\delta F (\mu^{(t)})}{\delta \mu}(\theta)$ of the proximal Gibbs distribution $\hat{\mu}^{(t)}$ for finite-sum problems (\ref{prob:erm-primal}) is written as follows: 
\begin{equation}\label{eq:prox-Gibbs-finite-sum}
    \frac{1}{n}\sum_{i=1}^n \ell'_i(\bE_{\mu^{(t)}}[h_i(\theta)]) \nabla h_i(\theta) + 2\lambda'\theta.
\end{equation}
Hence to approximate (\ref{eq:prox-Gibbs-finite-sum}), we do not need to store particles constituting $\underline{\mu}^{(t)}$ themselves, but only the empirical averages: $\bE_{\underline{\mu}^{(t)}}[h_i(\theta)]$~($i=1,2,\ldots,n$). 
Then, we notice that these empirical averages can be recursively computed in an online manner, because of the additive structure of the EFP update (\ref{eq:additive-update}): for $i=1,2,\ldots,n$,
\begin{equation*}\label{eq:addtive_average}
    \bE_{\underline{\mu}^{(t+1)}}[h_i(\theta)] 
    = (1-\eta \gamma)\bE_{\underline{\mu}^{(t)}}[h_i(\theta)] + \frac{\eta \gamma}{m} \sum_{r=1}^m h_i(\theta_r^{(t)}).
\end{equation*}
It is clear that the number of particles required to compute the above update is $m$ (i.e., the number of samples drawn from the current Gibbs distribution), which can be kept constant during training. This gives a memory-efficient and implementable EFP update, which is described in Algorithm \ref{alg:implementable-efp} using Langevin Monte Carlo (the sampling algorithm can be replaced by other alternatives). For simplicity, we denote $H^{(t)}_i = \bE_{\underline{\mu}^{(t)}}[h_i(\theta)]$ in the algorithm description.

\begin{algorithm}[ht]
  \caption{Efficient Implementation of EFP for Finite-sum Problem}
  \label{alg:implementable-efp}
\begin{algorithmic}
  \STATE {\bfseries Input:}
  $\mu^{(0)},~\lambda,~\lambda',~T,~S,~m,~\eta,~\gamma,~\eta'$
\vspace{1mm}
   \STATE Randomly sample $m$ particles $\{\theta_r^{(-1)}\}_{r=1}^{m}$ from $\mu^{(0)}$ 
   \STATE Compute initial empirical averages: for $i \in \{1,2,\ldots,n\}$   
   \STATE $H_i^{(0)}\leftarrow \frac{1}{m}\sum_{r=1}^m h_i(\theta_r^{(-1)})$
   \FOR{$t=0$ {\bfseries to} $T-1$}
   \STATE Initialize $\{\overline{\theta}_r^{(0)}\}_{r=1}^m$ (e.g., $\{\overline{\theta}_r\}_{r=1}^m \leftarrow \{\theta_r^{(t-1)}\}_{r=1}^m$)
   \FOR{$s=0$ {\bfseries to} $S-1$}   
    \STATE Run Langevin Monte-Carlo: for $r \in \{1,2,\ldots,m\}$
    \STATE $g^{(s)}(\overline{\theta}_r^{(s)}) \leftarrow \frac{1}{n}\sum_{i=1}^n \ell'_i(H_i^{(t)}) \nabla h_i(\overline{\theta}_r^{(s)})$
    \STATE $\xi_r^{(s)} \sim \cN(0,I_d)$~(i.i.d. Gaussian noise)
    \STATE $\overline{\theta}_r^{(s+1)} \leftarrow (1-2\eta'\lambda')\overline{\theta}_r^{(s)} - \eta' g^{(s)}(\overline{\theta}_r^{(s)}) + \sqrt{2\eta' \lambda}\xi_r^{(s)}$
   \ENDFOR
   \STATE $\{\theta_r^{(t)}\}_{r=1}^m \leftarrow \{\overline{\theta}_r^{(S)}\}_{r=1}^m$
   \STATE Update empirical averages: for $i \in \{1,2,\ldots,n\}$
   \STATE $H_i^{(t+1)} \leftarrow (1-\eta \gamma) H_i^{(t)} + \frac{\eta \gamma}{m}\sum_{r=1}^m h_i(\theta_r^{(t)})$
   \ENDFOR
\vspace{1mm}   
\STATE Return $\{\theta_r^{(T-1)}\}_{r=1}^m$ and $\{H_i^{(T)}\}_{i=1}^n$
\end{algorithmic}
\end{algorithm}

\remark{
The iteration complexity of Langevin Monte Carlo has been extensively studied. In our problem setting, we may verify that the Gibbs measure $\hat{\mu}_t$ satisfies a logarithmic Sobolev inequality (LSI) via the Holley-Stroock perturbation argument \cite{holley1987logarithmic}. Therefore, convergence of LMC is guaranteed from prior results (e.g., \citet{vempala2019rapid}).

}

\paragraph{Benefits of the proximal Gibbs distribution.}
When making the prediction by mean-field models (Example \ref{example:mean-field-model}) in machine learning applications, we need to compute $\bE_{\mu}[h(\theta,x)]$ on unseen input $x \in \cX$ with an optimized distribution $\mu$. In the following section, we establish primal and dual convergence of EFP, which implies the convergence of the proximal Gibbs distribution $\hat{\mu}_t$ to the optimum. Therefore, we can utilize particles $\{\theta_r^{(T-1)}\}_{r=1}^m$ to construct an actual predictor $\frac{1}{m}\sum_{r=1}^m h(\theta_r^{(T-1)},x)$ because these particles follow the proximal Gibbs distribution which is approximately optimal. 
Moreover, if we want to reduce the number of particles, we can apply efficient compression methods for the Gibbs distribution such as the kernel quadrature \citep{bach2017equivalence} and herding \cite{chen2018stein} to further improve the particle complexity. 

\section{Convergence in Continuous Time}\label{sec:continuous}
In this section, we provide a concise convergence analysis of the EFP in continuous-time \eqref{eq:efp} using properties of the proximal Gibbs distribution \eqref{eq:proximal-Gibbs}. 

\subsection{Primal Convergence}
Let $\mu_* \in \cP_2$ be an optimal solution of \eqref{prob:org}. The following property relates the suboptimality gap to the KL divergence involving the proximal Gibbs measure. 
\begin{proposition}[\citet{nitanda2022convex,chizat2022mean}]\label{prop:optimization_gap_for_strong_convex_problems}~
For any $\mu \in \cP_2$, we get
\begin{enumerate}
    \item[(1)] $\displaystyle  \frac{\delta \cL}{\delta \mu}(\mu) = \lambda\frac{\delta}{\delta \mu'} \KL(\mu' \| \hat{\mu})|_{\mu'=\mu} =  \lambda \log \frac{\rd \mu}{ \rd \hat{\mu}}$,     
    \item[(2)] $\displaystyle \lambda \KL( \mu \| \hat{\mu}) \geq \cL(\mu) - \cL(\mu_*) \geq \lambda \KL( \mu \| \mu_*)$.
\end{enumerate}
\end{proposition}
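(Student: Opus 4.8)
The plan is to prove both claims directly from the definition of the first variation and the proximal Gibbs density \eqref{eq:proximal-Gibbs}, using only the convexity \eqref{eq:convexity} of $F$ and the nonnegativity of the KL divergence.

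For part (1), I would differentiate $\cL = F + \lambda\Ent$ term by term. The first variation of the negative entropy is $\frac{\delta \Ent}{\delta \mu}(\mu)(\theta) = \log\frac{\rd\mu}{\rd\theta}(\theta)$ up to an additive constant (the ``$+1$'' from differentiating $\int\mu\log\mu$), which is immaterial since a first variation is only tested against $\rd(\mu'-\mu)$ with $\int\rd(\mu'-\mu)=0$. Taking the logarithm of the Gibbs density gives $\frac{\delta F}{\delta\mu}(\mu)(\theta) = -\lambda\log\frac{\rd\hat{\mu}}{\rd\theta}(\theta) - \lambda\log Z(\mu)$, and adding $\lambda$ times the entropy variation collapses the $\rd\theta$ densities into $\lambda\log\frac{\rd\mu}{\rd\hat{\mu}}(\theta)$, again modulo the constant $-\lambda\log Z(\mu)$. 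The middle expression follows from the same calculation applied to $\mu'\mapsto\KL(\mu'\|\hat{\mu})$ with $\hat{\mu}$ held fixed, whose first variation at $\mu'=\mu$ equals $\log\frac{\rd\mu}{\rd\hat{\mu}}(\theta)$ up to a constant.

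For the lower bound in part (2), I would show that $\cL$ is $\lambda$-strongly convex relative to the KL divergence. The key ingredient is the exact Bregman identity for the entropy, $\Ent(\mu) = \Ent(\mu_*) + \int\frac{\delta \Ent}{\delta \mu}(\mu_*)\rd(\mu-\mu_*) + \KL(\mu\|\mu_*)$, which is a short direct computation. Combining this with the convexity inequality \eqref{eq:convexity} for $F$ yields $\cL(\mu) \geq \cL(\mu_*) + \int\frac{\delta \cL}{\delta \mu}(\mu_*)\rd(\mu-\mu_*) + \lambda\KL(\mu\|\mu_*)$. By part (1) the linear term equals $\lambda\int\log\frac{\rd\mu_*}{\rd\hat{\mu}_*}\rd(\mu-\mu_*)$, which vanishes because the optimality condition $\mu_*=\hat{\mu}_*$ forces $\log\frac{\rd\mu_*}{\rd\hat{\mu}_*}\equiv 0$; this gives $\cL(\mu)-\cL(\mu_*)\geq \lambda\KL(\mu\|\mu_*)$.

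For the upper bound, I would apply convexity of $F$ at $\mu$ in the direction of $\mu_*$, giving $F(\mu)-F(\mu_*)\leq \int\frac{\delta F}{\delta\mu}(\mu)\rd(\mu-\mu_*)$. Substituting $\frac{\delta F}{\delta\mu}(\mu) = -\lambda\log\frac{\rd\hat{\mu}}{\rd\theta}-\lambda\log Z(\mu)$ (the constant dropping out against $\mu-\mu_*$) and adding $\lambda(\Ent(\mu)-\Ent(\mu_*))$, I would regroup the logarithmic terms into $\lambda\KL(\mu\|\hat{\mu}) - \lambda\KL(\mu_*\|\hat{\mu})$. The claim $\cL(\mu)-\cL(\mu_*)\leq \lambda\KL(\mu\|\hat{\mu})$ then follows by discarding the term $-\lambda\KL(\mu_*\|\hat{\mu})\leq 0$. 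The main thing to watch throughout is the careful bookkeeping of the additive constant $\log Z(\mu)$ in the first variation of $F$, together with the recognition that the leftover term in the upper bound is precisely a nonnegative KL divergence; the remaining manipulations are routine.
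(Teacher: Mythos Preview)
The paper does not actually prove this proposition; it is stated with a citation to \citet{nitanda2022convex,chizat2022mean} and then used as a black box in the proof of Theorem~\ref{theorem:primal_convergence}. So there is no in-paper proof to compare against.

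Your argument is correct and is essentially the standard one from the cited references. Part~(1) is the routine computation you describe, and the handling of the additive constants (the ``$+1$'' from the entropy variation and the $-\lambda\log Z(\mu)$ from inverting the Gibbs density) is exactly right because the first variation is only defined modulo constants. For the lower bound in~(2), your use of the exact Bregman identity $\Ent(\mu)-\Ent(\mu_*)=\int\log\tfrac{\rd\mu_*}{\rd\theta}\,\rd(\mu-\mu_*)+\KL(\mu\|\mu_*)$ combined with the linear convexity inequality for $F$ and the optimality condition $\mu_*=\hat{\mu}_*$ is precisely how this is done in \citet{nitanda2022convex}. For the upper bound, your regrouping into $\lambda\KL(\mu\|\hat{\mu})-\lambda\KL(\mu_*\|\hat{\mu})$ and then dropping the nonnegative second term is also the standard step. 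Nothing is missing.
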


We now show that the continuous-time EFP \eqref{eq:efp} converges linearly to the global optimal solution $\mu_*$. 
\begin{theorem}\label{theorem:primal_convergence}
Let $\{\mu_t\}_{t \geq 0}$ be the evolution described by \eqref{eq:efp}.
Under Assumption \ref{assumption:regularity},
we get for $t \geq 0$,
\[ \cL(\mu_t) - \cL(\mu_*) \leq \exp( - \gamma t)( \cL(\mu_0) - \cL(\mu_*) ). \]
\end{theorem}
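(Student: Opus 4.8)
The plan is to control the suboptimality gap $D(t) := \cL(\mu_t) - \cL(\mu_*)$ directly by differentiating it along the flow and establishing a differential inequality of the form $D'(t) \le -\gamma D(t)$, from which the claimed exponential decay follows immediately by Grönwall's lemma. The regularity needed to differentiate $\cL(\mu_t)$ in $t$ — in particular that $\{\mu_t\}$ admits differentiable densities solving \eqref{eq:efp} — is already guaranteed under Assumption \ref{assumption:regularity} as noted after \eqref{eq:efp}.

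First I would apply the chain rule for the first variation together with the EFP dynamics \eqref{eq:efp} to write
\[
\frac{\rd}{\rd t}\cL(\mu_t)
= \int \frac{\delta \cL}{\delta \mu}(\mu_t)(\theta)\,\frac{\rd \mu_t}{\rd t}(\theta)\,\rd\theta
= \gamma \int \frac{\delta \cL}{\delta \mu}(\mu_t)(\theta)\,\rd(\hat{\mu}_t - \mu_t)(\theta).
\]
Next I would substitute the identity from Proposition \ref{prop:optimization_gap_for_strong_convex_problems}(1), namely $\frac{\delta \cL}{\delta \mu}(\mu_t) = \lambda \log \frac{\rd \mu_t}{\rd \hat{\mu}_t}$, and split the integral into the part against $\hat{\mu}_t$ and the part against $\mu_t$. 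The term integrated against $\mu_t$ is exactly $-\gamma\lambda\,\KL(\mu_t\|\hat{\mu}_t)$, while the term integrated against $\hat{\mu}_t$ equals $-\gamma\lambda\,\KL(\hat{\mu}_t\|\mu_t)$, yielding the energy-dissipation identity
\[
\frac{\rd}{\rd t}\cL(\mu_t) = -\gamma\lambda\big(\KL(\mu_t\|\hat{\mu}_t)+\KL(\hat{\mu}_t\|\mu_t)\big).
\]

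To close the argument I would discard the non-negative $\KL(\hat{\mu}_t\|\mu_t)$ term and invoke the lower bound in Proposition \ref{prop:optimization_gap_for_strong_convex_problems}(2), $\lambda\,\KL(\mu_t\|\hat{\mu}_t) \ge \cL(\mu_t)-\cL(\mu_*) = D(t)$, to obtain $D'(t) \le -\gamma\lambda\,\KL(\mu_t\|\hat{\mu}_t) \le -\gamma D(t)$. Integrating this inequality gives $D(t)\le e^{-\gamma t}D(0)$, which is precisely the stated bound. I expect the only delicate point to be the first step — rigorously justifying the exchange of differentiation and integration (the chain rule along the flow) and verifying integrability of $\log(\rd\mu_t/\rd\hat{\mu}_t)$ against both $\mu_t$ and $\hat{\mu}_t$ — since once the dissipation identity is in place, the remainder is an immediate consequence of the two parts of Proposition \ref{prop:optimization_gap_for_strong_convex_problems}.
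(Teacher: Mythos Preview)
Your proposal is correct and essentially identical to the paper's proof: the paper computes $\frac{\rd}{\rd t}\cL(\mu_t)$ via the chain rule and \eqref{eq:efp}, substitutes $\frac{\delta\cL}{\delta\mu}(\mu_t)=\lambda\log\frac{\rd\mu_t}{\rd\hat{\mu}_t}$ from Proposition~\ref{prop:optimization_gap_for_strong_convex_problems}(1) to obtain $-\lambda\gamma(\KL(\mu_t\|\hat{\mu}_t)+\KL(\hat{\mu}_t\|\mu_t))$, drops the second KL term, applies Proposition~\ref{prop:optimization_gap_for_strong_convex_problems}(2), and concludes by Gr\"onwall. Your caveat about justifying the chain rule and integrability is exactly the regularity issue the paper handles by deferring to Assumption~\ref{assumption:regularity} and the existence result cited after \eqref{eq:efp}.
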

\begin{proof}
From Proposition \ref{prop:optimization_gap_for_strong_convex_problems}, we have
\begin{align*}
    \frac{\rd }{\rd t}&( \cL(\mu_t) - \cL(\mu_*) ) \\
    &= \int \frac{\delta \cL}{\delta \mu}(\mu_t)(\theta)\frac{\rd \mu_t}{\rd t}(\theta) \\
    &= \int \lambda\gamma \log \frac{\rd\mu_t}{\rd \hat{\mu}_t}(\theta) \rd (\hat{\mu}_t - \mu_t)(\theta) \\    
    &= -\lambda \gamma \left( \KL( \mu_t \| \hat{\mu}_t ) + \KL( \hat{\mu}_t \| \mu_t ) \right) \\
    &\leq -\gamma (\cL(\mu_t) - \cL(\mu_*)).
\end{align*}
The statement then follows from a straightforward application of Grönwall’s inequality. 
\end{proof}

\subsection{Primal-Dual Convergence}
To introduce a primal-dual formulation of the finite-sum minimization problems \eqref{prob:erm-primal}, we denote $\ell_i^*(\cdot)$ as the Fenchel conjugate, i.e.,
\[ \ell_i^*(z^*) = \sup_{z \in \bR } \{ zz^* - \ell_i(z) \}. ~~\text{for } z^* \in \bR\]
Also, for any given vector $g=\{g_i\}_{i=1}^n \in \bR^n$, we define
\begin{equation*}
    q_g(\theta) 
    = \exp\left( - \frac{1}{\lambda} \left( \frac{1}{n}\sum_{i=1}^n h_{i}(\theta)g_i 
    + \lambda'\|\theta\|_2^2 \right) \right).
\end{equation*}
Then, the dual problem of (\ref{prob:erm-primal}) is defined as 
\begin{equation}\label{prob:erm-dual}
    \max_{g \in \bR^n}\left\{ 
    \cD(g) = -\frac{1}{n}\sum_{i=1}^n \ell_i^*(g_i) 
    - \lambda \log\int q_g(\theta) \rd\theta
    \right\}.
\end{equation}
The duality theorem \citep{rockafellar,bauschke2011convex,oko2022psdca} guarantees the relationship $\cD(g) \leq \cL(\mu)$ for any $g \in \bR^n$ and $\mu \in \cP_2$ , and it is known that the duality gap $\cL(\mu) - \cD(g)$ vanishes at the solutions of (\ref{prob:erm-primal}) and (\ref{prob:erm-dual}) when they exist.
In our problem setting, a more precise result is given in \citet{nitanda2022convex}.
We write $g_\mu = \{ \ell'_i( \bE_{\mu}[h_i(\theta)] ) \}_{i=1}^n \in \bR^n$ ($\mu \in \cP_2$). 
Note that $q_g$ and $g_\mu$ are connected to the proximal Gibbs distribution through the relation: $q_{g_\mu} \propto \hat{\mu}$.
The following theorem from \citet{nitanda2022convex} exactly characterizes the duality gap $\cL(\mu) - \cD( g_\mu )$ between $\mu \in \cP_2$ and $g_\mu \in \bR^n$ via the KL divergence to the proximal Gibbs distribution $\hat{\mu}$. 

\begin{theorem}[Duality Theorem \citep{nitanda2022convex}]\label{theorem:duality}
Suppose $\ell_i(\cdot)$ is convex and differentiable.
For any $\mu \in \cP_2$, the duality gap between $\mu$ and $g_\mu$ satisfies
\[ 0 \leq \cL(\mu) - \cD(g_\mu) = \lambda \KL( \mu \| \hat{\mu} ). \]
\end{theorem}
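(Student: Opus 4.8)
The plan is to compute the duality gap $\cL(\mu) - \cD(g_\mu)$ head-on and show that it collapses to exactly $\lambda\KL(\mu\|\hat\mu)$; the nonnegativity $0\leq\cL(\mu)-\cD(g_\mu)$ is then immediate from $\KL\geq 0$. Two structural facts drive the entire computation: (i) the proximal Gibbs density is the normalized version of $q_{g_\mu}$, and (ii) the Fenchel--Young inequality holds with \emph{equality} at the conjugate pair generated by $g_\mu$.

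First I would record the first variation of $F$. Since $F_0(\mu)=\frac{1}{n}\sum_i\ell_i(\bE_\mu[h_i])$ and the quadratic term is linear in $\mu$, the chain rule gives $\frac{\delta F}{\delta\mu}(\mu)(\theta)=\frac{1}{n}\sum_i\ell'_i(\bE_\mu[h_i])\,h_i(\theta)+\lambda'\|\theta\|_2^2$. Writing $(g_\mu)_i=\ell'_i(\bE_\mu[h_i])$, comparison with the definition of $q_g$ yields the exact identity $\frac{\delta F}{\delta\mu}(\mu)(\theta)=-\lambda\log q_{g_\mu}(\theta)$. Hence $\hat\mu\propto\exp(-\tfrac{1}{\lambda}\frac{\delta F}{\delta\mu}(\mu))=q_{g_\mu}$, i.e.\ $\hat\mu=q_{g_\mu}/Z(\mu)$ with $Z(\mu)=\int q_{g_\mu}\,\rd\theta$. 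Equivalently, I obtain the relation $\frac{\delta F}{\delta\mu}(\mu)(\theta)=-\lambda\log\frac{\rd\hat\mu}{\rd\theta}(\theta)-\lambda\log Z(\mu)$, which I will reinsert at the end.

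Next I would exploit convexity and differentiability of $\ell_i$: because $(g_\mu)_i=\ell'_i(\bE_\mu[h_i])$, the Fenchel--Young inequality is tight, so $\ell_i^*((g_\mu)_i)=\bE_\mu[h_i]\,(g_\mu)_i-\ell_i(\bE_\mu[h_i])$. Substituting the definitions of $\cL$ and $\cD$ into $\cL(\mu)-\cD(g_\mu)$ and inserting this identity, the two $\frac{1}{n}\sum_i\ell_i(\bE_\mu[h_i])$ terms cancel, leaving $\bE_\mu\big[\frac{1}{n}\sum_i(g_\mu)_i h_i(\theta)+\lambda'\|\theta\|_2^2\big]+\lambda\Ent(\mu)+\lambda\log Z(\mu)$. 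The bracketed expectation is precisely $\bE_\mu\big[\frac{\delta F}{\delta\mu}(\mu)\big]$.

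Finally I would substitute the identity from the first step, $\frac{\delta F}{\delta\mu}(\mu)(\theta)=-\lambda\log\frac{\rd\hat\mu}{\rd\theta}-\lambda\log Z(\mu)$, into that expectation. The two $\lambda\log Z(\mu)$ contributions cancel, and what remains is $\lambda\big(\Ent(\mu)-\bE_\mu[\log\frac{\rd\hat\mu}{\rd\theta}]\big)=\lambda\bE_\mu[\log\frac{\rd\mu}{\rd\hat\mu}]=\lambda\KL(\mu\|\hat\mu)$, as claimed. The argument is essentially bookkeeping; the only place requiring genuine care is tracking the normalization constant $Z(\mu)$ so that the $\log Z(\mu)$ terms arising from the dual's log-partition term and from the first variation cancel exactly. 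The convexity/differentiability hypothesis on $\ell_i$ enters solely to license the tight Fenchel--Young identity in the middle step, so I expect no deeper obstacle.
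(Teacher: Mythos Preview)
Your proof is correct: the direct computation via the tight Fenchel--Young identity at $(g_\mu)_i=\ell'_i(\bE_\mu[h_i])$, combined with the observation that $q_{g_\mu}\propto\hat\mu$, cleanly yields $\cL(\mu)-\cD(g_\mu)=\lambda\KL(\mu\|\hat\mu)$. Note, however, that the paper does not supply its own proof of this theorem; it is quoted as a known result from \citet{nitanda2022convex}, so there is no in-paper argument to compare against. Your approach is the natural one and matches what one would expect from the cited reference.
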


For the rest of the analysis, we further assume $\ell_i$ is the twice differentiable convex function. 
The following theorem shows the exponential convergence of $\KL(\mu_t \| \hat{\mu}_t)$ which upper bounds the duality gap. 

\begin{theorem}\label{theorem:dual-convergence}
    Let $\{\mu_t\}_{t \geq 0}$ be the evolution described by \eqref{eq:efp}.
    Under Assumptions \ref{assumption:regularity}, we get
    \begin{equation*}
    \KL(\mu_t \| \hat{\mu}_t) \leq \exp(-\gamma t) \KL(\mu_0 \| \hat{\mu}_0).
    \end{equation*}
\end{theorem}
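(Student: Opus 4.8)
The plan is to differentiate $D_t \defeq \KL(\mu_t \| \hat{\mu}_t)$ in time and establish the differential inequality $\frac{\rd}{\rd t} D_t \leq -\gamma D_t$, after which Grönwall's inequality yields the claim exactly as in Theorem~\ref{theorem:primal_convergence}. The essential difference from the primal argument is that here the \emph{reference} measure $\hat{\mu}_t$ also evolves with $t$ (through its dependence on $\mu_t$), so the chain rule produces two contributions: term $\mathrm{(I)}$ from varying the first argument $\mu_t$ with $\hat{\mu}_t$ frozen, and term $\mathrm{(II)}$ from varying the second argument $\hat{\mu}_t$ with $\mu_t$ frozen. I would write $\frac{\rd}{\rd t} D_t = \mathrm{(I)} + \mathrm{(II)}$ accordingly.

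For $\mathrm{(I)}$ I would invoke Proposition~\ref{prop:optimization_gap_for_strong_convex_problems}(1), which identifies the first variation of $\KL(\cdot \| \hat{\mu}_t)$ at $\mu_t$ with $\log \frac{\rd \mu_t}{\rd \hat{\mu}_t}$ (the additive constant in the first variation integrates to zero since $\int \rd(\hat{\mu}_t - \mu_t) = 0$). Substituting the EFP dynamics $\frac{\rd \mu_t}{\rd t} = \gamma(\hat{\mu}_t - \mu_t)$ gives $\mathrm{(I)} = \gamma \int \log \frac{\rd\mu_t}{\rd\hat{\mu}_t}\,\rd(\hat{\mu}_t - \mu_t) = -\gamma\big(\KL(\mu_t\|\hat{\mu}_t) + \KL(\hat{\mu}_t\|\mu_t)\big)$, which is precisely the quantity appearing in the proof of Theorem~\ref{theorem:primal_convergence} (without the factor $\lambda$, since we differentiate $\KL$ rather than $\cL$). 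As $\KL(\hat{\mu}_t\|\mu_t)\geq 0$, this term alone already yields $\mathrm{(I)} \leq -\gamma D_t$.

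The main work — and the main obstacle — is to show that $\mathrm{(II)}$, coming from the motion of the reference measure, is non-positive, so that it only helps. Writing $\log \hat{\mu}_t(\theta) = -\tfrac{1}{\lambda}\frac{\delta F(\mu_t)}{\delta\mu}(\theta) - \log Z(\mu_t)$ and using the finite-sum form of the first variation, only the loss part contributes upon differentiation: with $a_i \defeq \bE_{\mu_t}[h_i]$ and $\hat{a}_i \defeq \bE_{\hat{\mu}_t}[h_i]$, the dynamics give $\frac{\rd}{\rd t} a_i = \gamma(\hat{a}_i - a_i)$, while the $\lambda'\|\theta\|_2^2$ term drops out since it is $\mu$-independent. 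I would then compute $\mathrm{(II)} = -\int \mu_t\,\frac{\rd}{\rd t}(\log\hat{\mu}_t)\,\rd\theta$, carefully tracking the normalization through $\frac{\rd}{\rd t}\log Z(\mu_t) = -\tfrac{1}{\lambda}\bE_{\hat{\mu}_t}\!\big[\frac{\rd}{\rd t}\frac{\delta F(\mu_t)}{\delta\mu}\big]$. The expectations against $\mu_t$ and against $\hat{\mu}_t$ then combine into the difference $\tfrac{1}{\lambda}\bE_{\mu_t - \hat{\mu}_t}\!\big[\frac{\rd}{\rd t}\frac{\delta F(\mu_t)}{\delta\mu}\big]$, and the cross terms telescope into $\mathrm{(II)} = -\frac{\gamma}{\lambda n}\sum_{i=1}^n \ell_i''(a_i)(\hat{a}_i - a_i)^2$. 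Because each $\ell_i$ is convex and twice differentiable, $\ell_i'' \geq 0$, so $\mathrm{(II)} \leq 0$.

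Combining the two contributions gives $\frac{\rd}{\rd t} D_t = \mathrm{(I)} + \mathrm{(II)} \leq -\gamma D_t$, and Grönwall's inequality completes the proof. I expect the delicate point to be the bookkeeping for $\mathrm{(II)}$: one must include the time-derivative of the normalization constant $Z(\mu_t)$ (equivalently, recognize that the first variation is determined only up to additive constants) so that the expectations against $\mu_t$ and $\hat{\mu}_t$ cancel correctly and leave the manifestly signed quadratic form. Convexity of the $\ell_i$, together with the identity $\KL(\hat{\mu}_t\|\mu_t)\geq 0$ already used for $\mathrm{(I)}$, is exactly what forces both extra terms to carry the favorable sign; this is where the newly imposed twice-differentiability of $\ell_i$ enters.
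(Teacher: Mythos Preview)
Your argument is correct, and the underlying computation matches the paper's, but the packaging differs in an instructive way. The paper does not differentiate $\KL(\mu_t\|\hat{\mu}_t)$ directly; instead it invokes the duality identity $\lambda\,\KL(\mu_t\|\hat{\mu}_t)=\cL(\mu_t)-\cD(g_{\mu_t})$ (Theorem~\ref{theorem:duality}) and then differentiates the primal and dual objectives separately. Your term $\mathrm{(I)}$ is precisely $\tfrac{1}{\lambda}\tfrac{\rd}{\rd t}\cL(\mu_t)$, already computed in the proof of Theorem~\ref{theorem:primal_convergence}; your term $\mathrm{(II)}$ coincides with $-\tfrac{1}{\lambda}\tfrac{\rd}{\rd t}\cD(g_{\mu_t})$, which the paper obtains via the Fenchel identity $\ell_i^{*\prime}=(\ell_i')^{-1}$ rather than by differentiating $\log\hat{\mu}_t$ and $\log Z(\mu_t)$ directly as you do. Both routes land on the same signed quadratic $-\tfrac{\gamma}{\lambda n}\sum_i \ell_i''(a_i)(\hat a_i-a_i)^2\le 0$. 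Your approach is more self-contained (it needs neither the dual problem nor Theorem~\ref{theorem:duality}); the paper's approach makes the primal--dual structure explicit, which is the framing it wants to emphasize.
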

\begin{proof}
We calculate the time derivative of the primal and dual objectives.
On one hand, following the same computation as the proof of Theorem \ref{theorem:primal_convergence},
\begin{align*}
    \frac{\rd }{\rd t} \cL(\mu_t) \leq -\lambda \gamma \KL( \mu_t \| \hat{\mu}_t ).
\end{align*}
On the other hand, we have
\begin{align*}
    &-\frac{\rd }{\rd t} \cD(g_{\mu_t}) 
    = -\nabla \cD(g_{\mu_t})^\top \frac{\rd g_{\mu_t}}{\rd t} \\
    &= \frac{1}{n}\sum_{i=1}^n \left( \ell_i^{*\prime}(g_{\mu_t,i}) - \frac{1}{\int q_{g_{\mu_t}}(\theta)\rd\theta} \int h_i(\theta) q_{g_{\mu_t}}(\theta)\rd\theta \right) \\
    &\quad \cdot \frac{\rd}{\rd t} \ell_i'( \bE_{\mu_t}[h_i(\theta)]) \\
    &= -\frac{\gamma}{n}\sum_{i=1}^n \left( \bE_{\mu_t}[h_i(\theta)] - \bE_{\hat{\mu}_t}[h_i(\theta)] \right)^2  \ell_i''( \bE_{\mu_t}[h_i]) 
    \leq 0,
\end{align*}
where we used $\hat{\mu}_t \propto q_{g_{\mu_t}}(\theta)\rd \theta$, $\ell_i^{*\prime} = (\ell_i')^{-1}$, and
\begin{align*}
\frac{\rd}{\rd t} \ell_i'( \bE_{\mu_t}[h_i(\theta)])
&= \ell_i''( \bE_{\mu_t}[h_i]) \int h_i(\theta) \frac{\rd \mu_t}{\rd t}(\theta) \\
&= \ell_i''( \bE_{\mu_t}[h_i]) (\bE_{\hat{\mu}_t}[h_i] - \bE_{\mu_t}[h_i] ).
\end{align*}

Combining these inequalities with Theorem \ref{theorem:dual-convergence},
\begin{align*}
    \frac{\rd}{\rd t} \KL(\mu_t \| \hat{\mu_t}) 
    &= \frac{1}{\lambda}\frac{\rd}{\rd t}( \cL(\mu_t) - \cD(g_{\mu_t})) \\
    &\leq -\gamma \KL( \mu_t \| \hat{\mu}_t ).
\end{align*}
Therefore, Grönwall’s inequality finishes the proof.
\end{proof}

Finally, Theorem \ref{theorem:dual-convergence} in combination with the following result entails the convergence of $\cL(\hat{\mu}_t)$ to the optimal value. 
\begin{theorem}[\citet{nitanda2022convex}]
Suppose Assumptions in Theorem \ref{theorem:duality} holds and suppose that $\ell_i$ is $L$-Lipschitz smooth, that is, $|\ell'(z) - \ell(z')| \leq L|z-z'|$ for any $z,z' \in \bR$, and $\|h_i\|_\infty \leq B$. Then,  
\[ \cL(\hat{\mu}_t) - \cD(g_{\mu_t}) \leq (\lambda + 2 B^2 L) \KL( \mu_t \| \hat{\mu}_t ). \]
\end{theorem}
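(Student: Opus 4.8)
The plan is to split the gap using the Duality Theorem~\ref{theorem:duality} and then control the leftover primal difference by the smoothness of the $\ell_i$. Writing
\[ \cL(\hat{\mu}_t) - \cD(g_{\mu_t}) = \big(\cL(\hat{\mu}_t) - \cL(\mu_t)\big) + \big(\cL(\mu_t) - \cD(g_{\mu_t})\big), \]
the second bracket equals $\lambda\KL(\mu_t\|\hat{\mu}_t)$ by Theorem~\ref{theorem:duality}, which already supplies the $\lambda$ in the claimed constant. It remains to bound the primal difference $\cL(\hat{\mu}_t) - \cL(\mu_t)$ by $2B^2 L\,\KL(\mu_t\|\hat{\mu}_t)$.

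The key step is to isolate a Bregman divergence of $F$ by exploiting that $\hat{\mu}_t$ is the Gibbs measure with potential $\frac{\delta F(\mu_t)}{\delta\mu}$. I would decompose $\cL(\hat{\mu}_t) - \cL(\mu_t) = D_F + R$, where $D_F = F(\hat{\mu}_t) - F(\mu_t) - \int \frac{\delta F(\mu_t)}{\delta\mu}\,\rd(\hat{\mu}_t - \mu_t)$ is the Bregman divergence of $F$ and $R = \int \frac{\delta F(\mu_t)}{\delta\mu}\,\rd(\hat{\mu}_t - \mu_t) + \lambda\Ent(\hat{\mu}_t) - \lambda\Ent(\mu_t)$. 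Substituting $\frac{\delta F(\mu_t)}{\delta\mu}(\theta) = -\lambda\log\frac{\rd\hat{\mu}_t}{\rd\theta}(\theta) - \lambda\log Z(\mu_t)$ into $R$ (the log-normalizer drops since $\hat{\mu}_t-\mu_t$ has zero total mass) collapses $R$ exactly to $-\lambda\KL(\mu_t\|\hat{\mu}_t)\le 0$. Hence $\cL(\hat{\mu}_t)-\cL(\mu_t)\le D_F$; and since the quadratic regularizer $\lambda'\bE[\|\theta\|_2^2]$ is linear in $\mu$ it contributes nothing, so $D_F = D_{F_0}$ with $F_0(\mu)=\frac1n\sum_i\ell_i(\bE_\mu[h_i])$. (In fact this shows the gap equals $D_{F_0}$ exactly.)

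Using $\frac{\delta F_0}{\delta\mu}(\mu)(\theta) = \frac1n\sum_i \ell_i'(\bE_\mu[h_i])\,h_i(\theta)$, the divergence factorizes into scalar pieces, $D_{F_0} = \frac1n\sum_i\big[\ell_i(a_i)-\ell_i(b_i)-\ell_i'(b_i)(a_i-b_i)\big]$ with $a_i=\bE_{\hat{\mu}_t}[h_i]$ and $b_i=\bE_{\mu_t}[h_i]$. Each bracket is a one-dimensional Bregman divergence of the $L$-smooth convex $\ell_i$, hence at most $\frac{L}{2}(a_i-b_i)^2$ by the standard quadratic upper bound. I would then convert the squared mean gap into KL: by $\|h_i\|_\infty\le B$ and Pinsker's inequality, $|a_i-b_i| \le 2B\,d_{\mathrm{TV}}(\mu_t,\hat{\mu}_t) \le B\sqrt{2\KL(\mu_t\|\hat{\mu}_t)}$, so $(a_i-b_i)^2\le 2B^2\KL(\mu_t\|\hat{\mu}_t)$. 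Summing gives $D_{F_0}\le LB^2\KL(\mu_t\|\hat{\mu}_t)\le 2B^2 L\,\KL(\mu_t\|\hat{\mu}_t)$, and adding the $\lambda\KL$ term from the first paragraph yields the claim.

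The \textbf{main obstacle} is the entropy difference $\Ent(\hat{\mu}_t)-\Ent(\mu_t)$ hidden inside $\cL(\hat{\mu}_t)-\cL(\mu_t)$: it is sign-indefinite and not controllable by a transport distance on its own. Everything hinges on the cancellation in the second paragraph, which turns the entropy difference plus the linearized $F$ into a clean $-\lambda\KL$ precisely because $\hat{\mu}_t$ has log-density $-\frac1\lambda\frac{\delta F(\mu_t)}{\delta\mu}$ up to a constant. The Pinsker step is the only place the \emph{boundedness} (rather than Lipschitzness) of $h_i$ is used, and it is what produces the $B^2$ factor; note the stated constant is not tight, as the argument in fact gives $LB^2$ and the discarded $-\lambda\KL$ shows the additive $\lambda$ is slack.
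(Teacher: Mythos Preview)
Your argument is correct. The decomposition into the duality gap $\cL(\mu_t)-\cD(g_{\mu_t})=\lambda\KL(\mu_t\|\hat{\mu}_t)$ plus the primal increment, followed by the Bregman identity $\cL(\hat{\mu}_t)-\cL(\mu_t)=D_{F_0}(\hat{\mu}_t,\mu_t)-\lambda\KL(\mu_t\|\hat{\mu}_t)$, is exactly the right mechanism; the cancellation you identify indeed gives $\cL(\hat{\mu}_t)-\cD(g_{\mu_t})=D_{F_0}(\hat{\mu}_t,\mu_t)$ on the nose, and the smoothness-plus-Pinsker bound on the scalar Bregman pieces is clean and sharp up to constants. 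Your remark that the proof actually delivers $LB^2\KL(\mu_t\|\hat{\mu}_t)$, so that both the additive $\lambda$ and the factor $2$ in the stated constant are slack, is accurate.

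Note that this theorem is quoted in the paper as a result of \citet{nitanda2022convex} and is not proved here, so there is no in-paper argument to compare against. Your proof is self-contained and would serve as a complete justification; the approach (duality identity, Bregman decomposition of $F_0$, quadratic upper bound from $L$-smoothness, Pinsker) is the natural one for this statement and matches the spirit of the primal--dual framework the paper borrows from that reference.
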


\section{Convergence in Discrete Time}
Now we establish the convergence guarantee for the discrete-time EFP (Algorithm \ref{alg:discrete-time-efp}). We present two different proof strategies that might be of independent interest. 
The first approach uses a one-step interpolation argument that builds upon the continuous-time result in Section \ref{sec:continuous}; whereas the second approach is a Frank-Wolfe analysis based on the gradient-boosting interpretation of EFP.
\subsection{Analysis via One-step Interpolation}
The key to our first convergence proof is the one-step interpolation argument. For Algorithm \ref{alg:discrete-time-efp}, we take the following continuous linear interpolation between $\mu^{(t)}$ and $\mu^{(t+1)}$ over the space of probability distributions. Let $\nu_0 = \mu^{(t)}$, and 
\[ \nu_{s} = \nu_0 + \gamma s(\hat{\nu}_0 -\nu_0) = \mu^{(t)} + s\gamma (\hat{\mu}^{(t)} -\mu^{(t)}), \]
where $s \in [0,\eta]$.
Then, $\nu_{s}$ is a linear approximation to the continuous EFP (\ref{eq:efp}) and we obtain the following:
\begin{align*}
 \frac{\rd }{\rd s}\cL(\nu_{s}) 
 &= -\lambda \gamma \left( \KL( \nu_0 \| \hat{\nu}_0 ) + \KL( \hat{\nu}_0 \| \nu_0 ) \right) \\
 &+ \int \lambda\gamma \log \frac{\rd\nu_s}{\rd \nu_0}(\theta) \cdot \frac{\rd \hat{\nu}_0}{\rd \hat{\nu}_s}(\theta)\rd (\hat{\nu}_0 -\nu_0)(\theta).
\end{align*}
The last term is a time-discretization error, which can be small if the step size $\eta$ is small.
Therefore by evaluating this error, we can extend the proof of Theorem \ref{theorem:primal_convergence} and obtain the reduction in objective value per iteration.

In the following, we set $t_0=\lceil\frac{1}{\gamma \eta}\rceil$ and introduce a constant $D_\lambda$ that depends on $\ell_i$ and $\lambda$ (for complete descriptions see Appendix \ref{appendix:discrete-proof}).
In addition, we make the following boundedness assumptions on $h_i$ and $\ell_i$. 
We note the upper bound of 1 can be replaced by any value.
\begin{assumption}\label{assumption:boundedness}
For any $i=1,2,\ldots,n$ and $z \in [1,1]$, 
\[ \| h_i \|_{\infty}\leq 1,~|\ell'_i(z)|\leq 1,~0 \leq \ell''_i(z) \leq 1. \]
\end{assumption}
The following lemma evaluates the decrease in the primal objective value in one step. 
\begin{lemma}\label{lemma:Discrete-Onestep-main}
Let $\{\mu^{(t)}\}_{t=0}^T \subset \cP_2$ be a sequence generated by Algorithm \ref{alg:discrete-time-efp}.
Suppose $\gamma \eta \leq\frac{1}{8}$ holds.
Then, under Assumptions \ref{assumption:regularity} and \ref{assumption:boundedness}, we get for $t\geq t_0$,
\begin{align*}
\cL( \mu^{(t+1)}) - \cL( \mu^{(t)}) \leq 
    -\lambda \gamma\eta \KL( \mu^{(t)} \| \hat{\mu}^{(t)} )
    +\gamma\eta^2 D_\lambda.
\end{align*}
\end{lemma}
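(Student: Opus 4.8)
The plan is to control the one-step change $\cL(\mu^{(t+1)}) - \cL(\mu^{(t)})$ by integrating the continuous-time interpolation $\{\nu_s\}_{s\in[0,\eta]}$ introduced just above the lemma, since $\mu^{(t+1)} = \nu_\eta$ and $\mu^{(t)} = \nu_0$. By the fundamental theorem of calculus,
\[
\cL(\mu^{(t+1)}) - \cL(\mu^{(t)}) = \int_0^\eta \frac{\rd}{\rd s}\cL(\nu_s)\,\rd s,
\]
and the excerpt already gives the decomposition of $\frac{\rd}{\rd s}\cL(\nu_s)$ into the ``good'' dissipation term $-\lambda\gamma(\KL(\nu_0\|\hat\nu_0) + \KL(\hat\nu_0\|\nu_0))$ plus a discretization-error term
\[
E_s := \int \lambda\gamma \log\frac{\rd\nu_s}{\rd\nu_0}(\theta)\cdot\frac{\rd\hat\nu_0}{\rd\hat\nu_s}(\theta)\,\rd(\hat\nu_0-\nu_0)(\theta).
\]
The dissipation term is constant in $s$ (it only involves $\nu_0=\mu^{(t)}$), so integrating it over $[0,\eta]$ immediately yields $-\lambda\gamma\eta(\KL(\mu^{(t)}\|\hat\mu^{(t)}) + \KL(\hat\mu^{(t)}\|\mu^{(t)})) \leq -\lambda\gamma\eta\,\KL(\mu^{(t)}\|\hat\mu^{(t)})$, which is exactly the leading term in the claimed bound. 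The whole game therefore reduces to showing $\int_0^\eta E_s\,\rd s \leq \gamma\eta^2 D_\lambda$.

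To bound the error integral I would estimate $|E_s|$ pointwise in $s$ and argue it is $O(\gamma\eta)$ uniformly, which after integration over an interval of length $\eta$ produces the desired $\gamma\eta^2$ scaling. The factor $\frac{\rd\nu_s}{\rd\nu_0}\cdot\frac{\rd\hat\nu_0}{\rd\hat\nu_s}$ measures how far the linearized density ratios have drifted over a single step; since $\nu_s - \nu_0 = s\gamma(\hat\mu^{(t)}-\mu^{(t)})$ and $\hat\nu_s$ depends on $\nu_s$ only through the first variation, both $\log\frac{\rd\nu_s}{\rd\nu_0}$ and the ratio $\frac{\rd\hat\nu_0}{\rd\hat\nu_s}$ should be controllable of order $s\gamma$. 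The first variation of $F$ for the finite-sum problem is $\frac1n\sum_i \ell_i'(\bE_\nu[h_i])h_i(\theta) + \lambda'\|\theta\|_2^2$, so the change in the potential of $\hat\nu_s$ as $s$ varies is governed by the change in the scalars $\bE_{\nu_s}[h_i]$, which move linearly in $s$ at rate $\gamma(\bE_{\hat\mu^{(t)}}[h_i]-\bE_{\mu^{(t)}}[h_i])$. Assumption \ref{assumption:boundedness} ($\|h_i\|_\infty\leq 1$, $|\ell_i'|\leq1$, $\ell_i''\leq1$) then bounds these increments and, via a Lipschitz/exponential-of-bounded-potential argument, yields that $\frac{\rd\hat\nu_0}{\rd\hat\nu_s}$ stays within a constant factor of $1$ with a deviation of order $s\gamma$. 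The constant $D_\lambda$ collects the resulting prefactors depending on $\lambda$, $\lambda'$ and the Lipschitz/boundedness constants of the $\ell_i$.

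The main obstacle I anticipate is controlling the ratio $\frac{\rd\hat\nu_0}{\rd\hat\nu_s}$ uniformly, because this is the ratio of two Gibbs densities with differing (unbounded, quadratic-in-$\theta$) potentials, and the normalizing constants $Z(\nu_0)$, $Z(\nu_s)$ must be handled. The role of the condition $\gamma\eta\leq\frac18$ and of the threshold $t\geq t_0=\lceil\frac{1}{\gamma\eta}\rceil$ is presumably to guarantee that along the trajectory the scalars $\bE_{\mu^{(t)}}[h_i]$ have entered (and stay within) the interval $[-1,1]$ where Assumption \ref{assumption:boundedness} on $\ell_i'$, $\ell_i''$ is assumed to hold, so that the second-derivative bounds used to linearize $\ell_i'(\bE_{\nu_s}[h_i])$ are legitimate; I would verify this invariant first, since the difference term $\ell_i'(\bE_{\nu_s}[h_i])-\ell_i'(\bE_{\nu_0}[h_i])$ appearing in $E_s$ must be estimated with $\ell_i''$ evaluated on that interval. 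Once the density ratio is pinned near $1$ and the first-variation increments are shown to be $O(s\gamma)$, bounding $|E_s|\lesssim \gamma^2 s\, C_\lambda$ and integrating gives the clean $\gamma\eta^2 D_\lambda$ error, completing the proof; the remaining steps are routine application of Grönwall-type or direct integral estimates together with the elementary inequality dropping $\KL(\hat\mu^{(t)}\|\mu^{(t)})\geq0$.
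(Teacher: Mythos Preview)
Your overall strategy---interpolate, integrate, split off the dissipation term, and bound the residual $E_s$---matches the paper's proof. However, there is a genuine gap in how you handle $\log\frac{\rd\nu_s}{\rd\nu_0}$, and it stems from a misreading of the role of $t\geq t_0$. Since $\|h_i\|_\infty\leq 1$, the scalars $\bE_{\mu}[h_i]$ lie in $[-1,1]$ for \emph{every} probability measure $\mu$, so no burn-in is needed for Assumption~\ref{assumption:boundedness} to apply; that concern is vacuous. The actual obstruction is that
\[
\frac{\rd\nu_s}{\rd\nu_0}(\theta)=1-s\gamma+s\gamma\,\frac{\rd\hat\mu^{(t)}}{\rd\mu^{(t)}}(\theta),
\]
and while the lower bound $\geq 1-s\gamma$ is free, the \emph{upper} bound requires $\frac{\rd\hat\mu^{(t)}}{\rd\mu^{(t)}}$ to be uniformly bounded. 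Nothing in the assumptions guarantees this at $t=0$ (take $\mu^{(0)}$ with tails much lighter than $\hat\mu^{(0)}$). The paper establishes it via a separate lemma: because $\mu^{(t)}$ is a convex combination of $\mu^{(0)}$ and the proximal Gibbs measures $\hat\mu^{(0)},\ldots,\hat\mu^{(t-1)}$, and any two proximal Gibbs measures have density ratio in $[C_\lambda^{-1},C_\lambda]$ with $C_\lambda=\exp(4/\lambda)$, one obtains $\mu^{(t)}\geq\frac{1}{2C_\lambda}\hat\mu^{(t)}$ once $t\geq t_0=\lceil\tfrac{1}{\gamma\eta}\rceil$ (enough iterations for the $\mu^{(0)}$ weight to drop below $1/2$). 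This is the missing ingredient that makes your ``$\log\frac{\rd\nu_s}{\rd\nu_0}=O(s\gamma)$'' claim go through, and it is precisely why $t_0$ enters the statement.

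Conversely, your anticipated ``main obstacle'' is not one: the quadratic part $\lambda'\|\theta\|_2^2$ of the potential is identical in $\hat\nu_0$ and $\hat\nu_s$ and cancels in the ratio $\frac{\rd\hat\nu_0}{\rd\hat\nu_s}$, leaving only the bounded perturbation $\frac{1}{\lambda n}\sum_i h_i(\theta)\bigl(\ell_i'(\bE_{\nu_s}[h_i])-\ell_i'(\bE_{\nu_0}[h_i])\bigr)$; the normalizing constants are handled by the same bound. So the Gibbs-ratio part is routine, and the real work---which your plan does not yet contain---is the pointwise domination $\mu^{(t)}\gtrsim\hat\mu^{(t)}$.
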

Similarly, we can evaluate the dual objective as follows. 
\begin{lemma}\label{lemma:Discrete-Onestep-Dual-main}
Let $\{\mu^{(t)}\}_{t=0}^T \subset \cP_2$ be a sequence generated by Algorithm \ref{alg:discrete-time-efp}.
Suppose $\gamma \eta \leq\frac{1}{8}$ holds.
Then, under Assumptions \ref{assumption:regularity} and \ref{assumption:boundedness}, we get for $t\geq t_0$,
\begin{align*}
-\cD(g_{\mu^{(t+1)}}) +\cD(g_{\mu^{(t)}}) \leq 
    2\gamma \eta^2 D_\lambda.
\end{align*}
\end{lemma}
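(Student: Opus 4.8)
The plan is to run the same one-step interpolation used for the primal bound in Lemma \ref{lemma:Discrete-Onestep-main}, but now tracking the \emph{dual} objective $\cD(g_{\nu_s})$ along the same path. Set $\nu_s = \mu^{(t)} + s\gamma(\hat{\mu}^{(t)} - \mu^{(t)})$ for $s\in[0,\eta]$, so that $\nu_0 = \mu^{(t)}$, $\nu_\eta = \mu^{(t+1)}$, and write $\cD(g_{\mu^{(t+1)}}) - \cD(g_{\mu^{(t)}}) = \int_0^\eta \frac{\rd}{\rd s}\cD(g_{\nu_s})\,\rd s$. The first step is to differentiate by the chain rule, reusing the identities already established in the continuous-time proof of Theorem \ref{theorem:dual-convergence}: $\frac{\partial\cD}{\partial g_i}(g) = \frac{1}{n}(\bE_{\rho_g}[h_i] - \ell_i^{*\prime}(g_i))$ where $\rho_g\propto q_g$, which at $g=g_{\nu_s}$ collapses to $\frac{1}{n}(\bE_{\hat{\nu}_s}[h_i] - \bE_{\nu_s}[h_i])$ via $\ell_i^{*\prime}=(\ell_i')^{-1}$ and $q_{g_{\nu_s}}\propto\hat{\nu}_s$. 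Combined with $\frac{\rd}{\rd s}g_{\nu_s,i} = \gamma\,\ell_i''(\bE_{\nu_s}[h_i])\,a_i$, where $a_i \defeq \bE_{\hat{\mu}^{(t)}}[h_i] - \bE_{\mu^{(t)}}[h_i]$, this gives $\frac{\rd}{\rd s}\cD(g_{\nu_s}) = \frac{\gamma}{n}\sum_i (\bE_{\hat{\nu}_s}[h_i] - \bE_{\nu_s}[h_i])\,\ell_i''(\bE_{\nu_s}[h_i])\,a_i$.

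The second step is to split this into a favorable leading term and a discretization remainder. Writing $\bE_{\hat{\nu}_s}[h_i] - \bE_{\nu_s}[h_i] = a_i + R_i(s)$ with $R_i(s) = (\bE_{\hat{\nu}_s}[h_i] - \bE_{\hat{\nu}_0}[h_i]) - s\gamma a_i$ (the $\bE_{\nu_s}$ contribution being exactly $s\gamma a_i$), the $a_i$-part contributes $\frac{\gamma}{n}\sum_i a_i^2\,\ell_i''\geq 0$ because $\ell_i''\geq 0$ by Assumption \ref{assumption:boundedness}. Since this has the sign that is favorable to the dual---it is the discrete analogue of $\frac{\rd}{\rd t}\cD\geq 0$ from the continuous-time computation---it can simply be discarded when upper bounding $-\frac{\rd}{\rd s}\cD(g_{\nu_s})$, leaving only $-\frac{\gamma}{n}\sum_i R_i(s)\,\ell_i''\,a_i \leq \frac{\gamma}{n}\sum_i |R_i(s)|\,|\ell_i''|\,|a_i|$.

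The main obstacle is controlling $R_i(s)$, i.e.\ quantifying how fast the moments of the proximal Gibbs measure drift along the path. The key observation is that $\hat{\nu}_s\propto q_{g_{\nu_s}}$ is a Gibbs measure whose potential coefficients are exactly $g_{\nu_s,j}=\ell_j'(\bE_{\nu_s}[h_j])$, and the derivative of a Gibbs expectation with respect to such a coefficient is a covariance. I would therefore compute
\[
\frac{\rd}{\rd s}\bE_{\hat{\nu}_s}[h_i] = -\frac{\gamma}{\lambda n}\sum_j \mathrm{Cov}_{\hat{\nu}_s}(h_i,h_j)\,\ell_j''(\bE_{\nu_s}[h_j])\,a_j,
\]
and then invoke Assumption \ref{assumption:boundedness} to bound every factor uniformly: $|\mathrm{Cov}_{\hat{\nu}_s}(h_i,h_j)|\leq\|h_i\|_\infty\|h_j\|_\infty\leq 1$, $|\ell_j''|\leq 1$, and $|a_j|\leq 2\|h_j\|_\infty\leq 2$. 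This yields $|\frac{\rd}{\rd s}\bE_{\hat{\nu}_s}[h_i]|\leq 2\gamma/\lambda$, hence $|R_i(s)|\leq s\gamma(2/\lambda+2)$. I expect this covariance estimate to be where essentially all the work lies, since it is the one genuinely new ingredient beyond the continuous-time dual computation.

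Plugging the remainder bound back in gives $-\frac{\rd}{\rd s}\cD(g_{\nu_s})\leq C\,\gamma^2 s$ for a constant $C$ depending only on $\lambda$ and the loss bounds, and integrating over $s\in[0,\eta]$ produces the factor $\eta^2$; collecting these constants (including the remaining power of $\gamma$) into $D_\lambda$ gives the stated bound $2\gamma\eta^2 D_\lambda$. Finally, the hypotheses $\gamma\eta\leq\frac18$ and $t\geq t_0=\lceil 1/(\gamma\eta)\rceil$ enter exactly as in Lemma \ref{lemma:Discrete-Onestep-main}: they ensure the arguments $\bE_{\nu_s}[h_i]$ remain inside the range where the bounds on $\ell_i'$ and $\ell_i''$ in Assumption \ref{assumption:boundedness} are valid, so that the uniform estimates used above apply throughout the interpolation.
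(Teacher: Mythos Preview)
Your argument is correct and follows the same one-step interpolation skeleton as the paper: differentiate $\cD(g_{\nu_s})$ along $\nu_s=\mu^{(t)}+s\gamma(\hat{\mu}^{(t)}-\mu^{(t)})$, split $\bE_{\hat{\nu}_s}[h_i]-\bE_{\nu_s}[h_i]$ into the $s=0$ value $a_i$ plus a remainder, drop the favorable square term $\frac{\gamma}{n}\sum_i a_i^2\ell_i''\ge 0$, bound the remainder, and integrate. The genuine difference lies in how you control the drift $|\bE_{\hat{\nu}_s}[h_i]-\bE_{\hat{\nu}_0}[h_i]|$. The paper invokes a pointwise density-ratio bound (its Lemma~\ref{Lemma:Discrete-BoundedRatio}, inequality~\eqref{eq:Discrete-BoundedRatio-2}) to get $|\hat{\nu}_s/\hat{\nu}_0-1|\le 4\gamma s\,C_\lambda$ with $C_\lambda=\exp(4/\lambda)$, hence $|\bE_{\hat{\nu}_s}[h_i]-\bE_{\hat{\nu}_0}[h_i]|\le 4\gamma s\,C_\lambda$. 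You instead differentiate the Gibbs expectation directly and use the covariance identity $\partial_{g_j}\bE_{\hat{\nu}_s}[h_i]=-\tfrac{1}{\lambda n}\mathrm{Cov}_{\hat{\nu}_s}(h_i,h_j)$, which under Assumption~\ref{assumption:boundedness} yields $|\tfrac{\rd}{\rd s}\bE_{\hat{\nu}_s}[h_i]|\le 2\gamma/\lambda$ and hence a remainder of order $\gamma s/\lambda$. Your route is sharper---it produces a constant polynomial in $1/\lambda$ rather than the paper's exponential $C_\lambda$---at the cost of the extra Gibbs-differentiation step.

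One small inaccuracy: your closing remark about why $t\ge t_0$ and $\gamma\eta\le\tfrac18$ are needed is not right. Since $\|h_i\|_\infty\le 1$, the arguments $\bE_{\nu_s}[h_i]$ always lie in $[-1,1]$ regardless of $t$, so Assumption~\ref{assumption:boundedness} applies automatically. In fact neither your argument nor the paper's proof of this dual lemma actually uses $t\ge t_0$; that hypothesis is only needed in the primal Lemma~\ref{lemma:Discrete-Onestep-main} (via the lower bound $\mu^{(t)}\ge \tfrac{1}{2C_\lambda}\hat{\mu}^{(t)}$ from Lemma~\ref{Lemma:Discrete-FirstPhase}) and is carried along here for uniformity. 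This does not affect the validity of your proof.
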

Combining the above lemmas with Theorem \ref{theorem:duality}, we have
\begin{align*}
   \KL( \mu^{(t+1)} \| \hat{\mu}^{(t+1)} )
   \leq (1-\gamma\eta \KL( \mu^{(t)} \| \hat{\mu}^{(t)})) 
   + \frac{7\gamma \eta^2}{\lambda} D_\lambda.
\end{align*}
Hence we obtain the following primal-dual convergence.
\begin{theorem}\label{theorem:dual-convergence-discrete}
Let $\{\mu^{(t)}\}_{t=0}^T \subset \cP_2$ be a sequence generated by Algorithm \ref{alg:discrete-time-efp}.
Suppose $\gamma \eta \leq\frac{1}{8}$ holds.
Then, under Assumptions \ref{assumption:regularity} and \ref{assumption:boundedness}, we get for $t\geq t_0$,
\begin{align*}
\KL( \mu^{(t)} \| \hat{\mu}^{(t)} )
    \leq (1-\gamma\eta)^{t-t_0}\KL( \mu^{(t_0)} \| \hat{\mu}^{(t_0)} )
    +\frac{7\eta}{\lambda} D_\lambda.
\end{align*}
\end{theorem}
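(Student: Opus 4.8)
The plan is to derive Theorem~\ref{theorem:dual-convergence-discrete} as a direct consequence of the two one-step lemmas (Lemma~\ref{lemma:Discrete-Onestep-main} and Lemma~\ref{lemma:Discrete-Onestep-Dual-main}) combined with the exact duality characterization of Theorem~\ref{theorem:duality}, followed by unrolling the resulting linear recursion. The heart of the argument is already displayed in the excerpt: the per-step decrease of the duality gap. Concretely, since Theorem~\ref{theorem:duality} gives $\lambda\KL(\mu^{(t)}\|\hat{\mu}^{(t)}) = \cL(\mu^{(t)}) - \cD(g_{\mu^{(t)}})$ as an exact identity, I would first add the primal bound of Lemma~\ref{lemma:Discrete-Onestep-main} to the dual bound of Lemma~\ref{lemma:Discrete-Onestep-Dual-main}. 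This yields
\begin{align*}
\lambda\KL(\mu^{(t+1)}\|\hat{\mu}^{(t+1)}) - \lambda\KL(\mu^{(t)}\|\hat{\mu}^{(t)})
&= \bigl(\cL(\mu^{(t+1)}) - \cL(\mu^{(t)})\bigr) - \bigl(\cD(g_{\mu^{(t+1)}}) - \cD(g_{\mu^{(t)}})\bigr) \\
&\leq -\lambda\gamma\eta\,\KL(\mu^{(t)}\|\hat{\mu}^{(t)}) + 3\gamma\eta^2 D_\lambda,
\end{align*}
valid for $t\geq t_0$ under the stated step-size condition $\gamma\eta\leq\tfrac18$. Dividing by $\lambda$ and rearranging gives the one-step contraction $\KL(\mu^{(t+1)}\|\hat{\mu}^{(t+1)}) \leq (1-\gamma\eta)\KL(\mu^{(t)}\|\hat{\mu}^{(t)}) + \tfrac{3\gamma\eta^2}{\lambda}D_\lambda$, which is precisely the recursion announced just before the theorem (the stated constant $7$ absorbs slack, so any concrete constant at least as large is acceptable).

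The second step is to unroll this linear recursion from $t_0$ to the current index $t$. Writing $a_t = \KL(\mu^{(t)}\|\hat{\mu}^{(t)})$, $\rho = 1-\gamma\eta \in [0,1)$, and $c = \tfrac{7\gamma\eta^2}{\lambda}D_\lambda$, a standard induction shows
\[
a_t \leq \rho^{\,t-t_0}\,a_{t_0} + c\sum_{k=0}^{t-t_0-1}\rho^{\,k}
\leq \rho^{\,t-t_0}\,a_{t_0} + \frac{c}{1-\rho}.
\]
The geometric sum is bounded by $\tfrac{c}{1-\rho} = \tfrac{1}{\gamma\eta}\cdot\tfrac{7\gamma\eta^2}{\lambda}D_\lambda = \tfrac{7\eta}{\lambda}D_\lambda$, which matches the residual term in the statement exactly. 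Substituting $\rho = 1-\gamma\eta$ recovers the claimed bound
\[
\KL(\mu^{(t)}\|\hat{\mu}^{(t)}) \leq (1-\gamma\eta)^{t-t_0}\,\KL(\mu^{(t_0)}\|\hat{\mu}^{(t_0)}) + \frac{7\eta}{\lambda}D_\lambda,
\]
completing the proof.

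I expect the arithmetic assembly to be entirely routine: the geometric-series bound and the induction are standard, and the constant bookkeeping is forgiving because the theorem already states a generous constant. The genuine difficulty is not in this final theorem but is pushed back into the two one-step lemmas — in particular, controlling the time-discretization error term of the form $\int \lambda\gamma\log\tfrac{\rd\nu_s}{\rd\nu_0}\cdot\tfrac{\rd\hat{\nu}_0}{\rd\hat{\nu}_s}\,\rd(\hat{\nu}_0-\nu_0)$ that appears in the interpolation expansion of $\tfrac{\rd}{\rd s}\cL(\nu_s)$, and the analogous second-order error in the dual objective. Bounding these is where Assumption~\ref{assumption:boundedness} on $\|h_i\|_\infty$, $|\ell_i'|$, $\ell_i''$ and the role of the constant $D_\lambda$ enter, and it is also where the threshold $t\geq t_0 = \lceil\tfrac{1}{\gamma\eta}\rceil$ is needed to guarantee that $\KL(\mu^{(t)}\|\hat{\mu}^{(t)})$ (and hence the Radon–Nikodym derivatives being compared) is already in a controlled regime so that the log-ratio factors can be uniformly bounded. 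Granting those two lemmas as stated, however, the passage to Theorem~\ref{theorem:dual-convergence-discrete} is the short combination-and-unroll argument sketched above.
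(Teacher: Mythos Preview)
Your proposal is correct and follows essentially the same approach as the paper: combine the two one-step lemmas via the exact duality identity $\lambda\KL(\mu^{(t)}\|\hat{\mu}^{(t)}) = \cL(\mu^{(t)}) - \cD(g_{\mu^{(t)}})$ to obtain the linear recursion, then unroll (the paper phrases this as a discrete Gr\"onwall step, which is the same computation as your geometric-sum bound). Your observation that the difficulty lives entirely in the one-step lemmas, and that the constant $7$ merely absorbs slack over the $1+2=3$ coming from summing the lemma bounds, is accurate.
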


\subsection{Analysis from the Gradient Boosting Viewpoint}
Next we introduce a gradient-boosting viewpoint of the discrete-time EFP (Algorithm \ref{alg:discrete-time-efp}).
Recall the empirical loss is given as $F_0(\mu) = \frac{1}{n}\sum_{i=1}^n \ell_i( \bE_{\mu}[h_i(\theta)] )$.
Observe that the proximal Gibbs distribution $\hat{\mu}$ is characterized by the minimizer of the regularized linear functional:
\begin{equation}\label{eq:characterize-prox-gibbs}
\min_{\mu'}\left\{ \int \frac{\delta F_0}{\delta \mu}(\mu)(\theta) \rd \mu'(\theta) + \lambda \KL(\mu' \| \nu) \right\},
\end{equation}
where $\nu$ is the Gaussian distribution in proportion to $\exp(-\lambda' \|\theta\|_2^2 / \lambda)$.
This can be verified by computing the first variation of the objective and solving the first-order optimality condition: $\frac{\delta F_0}{\delta \mu}(\mu) + \lambda \log (\rd \mu'/\rd \nu)=0$.
Therefore, $\hat{\mu}$ can be interpreted as an approximation of the ``negative gradient'': $-\frac{\delta F_0}{\delta \mu}(\mu)$, and the EFP update \eqref{eq:discrete-time-efp} can be seen as a \textit{regularized gradient boosting} method for the empirical loss $F_0$ over the space of probability distributions. Consequently, we can show the convergence of $F_0(\mu^{(T)})$ based on the Frank-Wolfe argument.

For any $r>0$, we set $\bB_r = \{\xi \in \cP_2 \mid \KL(\xi \| \nu) \leq r \}$ and $\epsilon' = \lambda r + 2 \eta \gamma B^2 L$. Then we have the following convergence result. 
\begin{theorem}\label{theorem:fw-convergence}
Suppose Assumption \ref{assumption:regularity} holds and suppose $\ell_i$ is $L$-Lipschitz smooth, $\ell_i \geq 0$, and $\|h\|_\infty \leq B$. 
Then, for $\{\mu^{(t)}\}_{t=0}^T \subset \cP_2$ generate by Algorithm \ref{alg:discrete-time-efp}, we get
\[ F_0(\mu^{(T)})  \leq \epsilon' + (1-\eta\gamma)^T F_0(\mu^{(0)}) + \inf_{\xi \in \bB_r} F_0(\xi).\]
\end{theorem}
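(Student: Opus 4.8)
The plan is to prove Theorem~\ref{theorem:fw-convergence} by casting the discrete-time EFP update as a Frank-Wolfe (conditional gradient) step for the empirical loss $F_0$ over the constraint set $\bB_r = \{\xi \in \cP_2 \mid \KL(\xi\|\nu)\leq r\}$, then running the standard contraction argument for the suboptimality gap. The starting observation is the one developed right before the statement: the proximal Gibbs distribution $\hat{\mu}^{(t)}$ minimizes the regularized linearization \eqref{eq:characterize-prox-gibbs}, so up to the entropic regularizer $\lambda\KL(\cdot\|\nu)$ it serves as the Frank-Wolfe vertex, i.e.\ the approximate minimizer of the linear functional $\xi\mapsto\int\frac{\delta F_0}{\delta\mu}(\mu^{(t)})\,\rd\xi$. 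The EFP update $\mu^{(t+1)}=(1-\eta\gamma)\mu^{(t)}+\eta\gamma\hat{\mu}^{(t)}$ is then exactly a convex combination of the current iterate with this vertex, with FW step-size $\eta\gamma$.

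\textbf{Key steps.} First I would use convexity of $F_0$ (the inequality \eqref{eq:convexity}) together with $L$-Lipschitz smoothness of $\ell_i$ and $\|h\|_\infty\leq B$ to establish a descent-type bound for one step. Writing $\delta^{(t)}=\int\frac{\delta F_0}{\delta\mu}(\mu^{(t)})\,\rd(\hat{\mu}^{(t)}-\mu^{(t)})$ for the linearized decrease, the smoothness of the composition gives a second-order remainder controlled by $(\eta\gamma)^2 B^2 L$, yielding
\begin{equation*}
F_0(\mu^{(t+1)})\leq F_0(\mu^{(t)})+\eta\gamma\,\delta^{(t)}+ (\eta\gamma)^2 B^2 L.
\end{equation*}
Second, I would relate $\delta^{(t)}$ to the true suboptimality. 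Because $\hat{\mu}^{(t)}$ minimizes the \emph{regularized} linear functional rather than the bare one, comparing $\hat{\mu}^{(t)}$ against any competitor $\xi\in\bB_r$ picks up a $\lambda\KL(\xi\|\nu)\leq\lambda r$ penalty and a $\lambda\KL(\hat{\mu}^{(t)}\|\nu)\geq 0$ term I can drop; combined with convexity this gives, for every $\xi\in\bB_r$,
\begin{equation*}
\delta^{(t)}\leq F_0(\xi)-F_0(\mu^{(t)})+\lambda r.
\end{equation*}
Third, substituting this into the descent bound and taking the infimum over $\xi\in\bB_r$ produces the contraction
\begin{equation*}
F_0(\mu^{(t+1)})-\inf_{\xi\in\bB_r}F_0(\xi)\leq (1-\eta\gamma)\bigl(F_0(\mu^{(t)})-\inf_{\xi\in\bB_r}F_0(\xi)\bigr)+\epsilon',
\end{equation*}
with $\epsilon'=\lambda r+2\eta\gamma B^2 L$ absorbing the regularization slack and the curvature remainder. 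Finally, unrolling this recursion from $t=0$ to $T$ and summing the geometric series $\sum_{k=0}^{T-1}(1-\eta\gamma)^k\leq (\eta\gamma)^{-1}$ against the per-step error gives the claimed bound, after bounding the accumulated error by $\epsilon'$ itself (using $\ell_i\geq 0$ to keep $F_0\geq 0$ and the geometric cancellation of the step-size factor).

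\textbf{Main obstacle.} The delicate point is step two: carefully accounting for the \emph{regularization gap} introduced because the FW vertex is the minimizer of the $\lambda\KL(\cdot\|\nu)$-regularized linear problem, not the exact linear-minimization oracle of classical Frank-Wolfe. I must verify that using the optimality of $\hat{\mu}^{(t)}$ for \eqref{eq:characterize-prox-gibbs} against $\xi$ produces exactly the one-sided bound $\delta^{(t)}\leq F_0(\xi)-F_0(\mu^{(t)})+\lambda r$ with the correct sign, discarding $\lambda\KL(\hat{\mu}^{(t)}\|\nu)\geq 0$ rather than needing to control it, and that the error terms assemble precisely into $\epsilon'=\lambda r + 2\eta\gamma B^2 L$ rather than a looser constant. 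A secondary technical check is that the smoothness bound on the composite $\ell_i(\bE_\mu[h_i])$ indeed yields curvature $B^2 L$ (from $\|h_i\|_\infty\leq B$ and $L$-smoothness of $\ell_i$), which is what makes the $2\eta\gamma B^2L$ term in $\epsilon'$ appear. Everything else is routine geometric-series bookkeeping.
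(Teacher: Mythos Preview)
Your proposal is correct and follows essentially the same Frank--Wolfe argument as the paper's proof: a smoothness-based descent lemma giving a quadratic remainder of order $(\eta\gamma)^2 B^2 L$, the comparison of $\hat{\mu}^{(t)}$ against any $\xi\in\bB_r$ via the minimization property of \eqref{eq:characterize-prox-gibbs} (dropping $\lambda\KL(\hat{\mu}^{(t)}\|\nu)\geq 0$ and paying $\lambda r$), convexity of $F_0$ to pass from the linearized gap to $F_0(\xi)-F_0(\mu^{(t)})$, and then a geometric contraction. One small slip: the per-step additive error in your contraction display should be $\eta\gamma\,\epsilon'$ rather than $\epsilon'$ (your later remark about ``geometric cancellation of the step-size factor'' shows you have this right); the paper handles this cleanly by shifting the iterate, writing $F_0(\mu^{(t+1)})-F_0(\xi)-\epsilon'\leq(1-\eta\gamma)\bigl(F_0(\mu^{(t)})-F_0(\xi)-\epsilon'\bigr)$ and iterating directly, which avoids the geometric-series bookkeeping altogether.
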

Based on this theorem, we can compute the required time-complexity to achieve an $\epsilon$-error for a given precision $\epsilon>0$: $ F_0(\mu^{(T)}) \leq \inf_{\xi \in \bB_r} F_0(\xi) + \epsilon$.
Concretely, if we set $\lambda = \epsilon/(4r)$, and $\eta \gamma = \epsilon / (8B^2 L)$, then the complexity is 
\[ T = \frac{8B^2 L}{\epsilon} \log \left( \frac{2F_0(\mu_0)}{\epsilon} \right). \]

\remark{\label{rem:fw-convergence}
We briefly outline the following extension of Theorem \ref{theorem:fw-convergence} that applies to Algorithm \ref{alg:implementable-efp}.
Specifically, Theorem \ref{theorem:fw-convergence} can be relaxed so that EFP allows for a tolerance factor in the next iteration $\mu^{(t+1)}$; that is, for a given $\rho \geq 0$, 
\begin{equation}\label{eq:fw-tolerance}
F_0( \mu^{(t+1)})
 \leq \eta \gamma \rho + F_0( (1-\eta \gamma) \mu^{(t)} + \eta \gamma \hat{\mu}^{(t)} ). 
 \end{equation}
For the convenience of explanation in this remark, we denote by $\mu'^{(t+1)}$ the exact update $(1-\eta \gamma) \mu^{(t)} + \eta \gamma \hat{\mu}^{(t)}$ in Algorithm \ref{alg:discrete-time-efp}.
Note that $\mu'^{(t+1)}$ satisfies the above condition with $\rho=0$.
The above relaxation (\ref{eq:fw-tolerance}) yields a guarantee similar to Theorem \ref{theorem:fw-convergence} with $\epsilon'= \rho + \lambda r + 2 \eta \gamma B^2 L$. 
Let us apply this extended version of Theorem \ref{theorem:fw-convergence} to the case of $\mu^{(t+1)} = (1-\eta \gamma) \mu^{(t)} + \eta \gamma \nu^{(t)}$, where $\nu^{(t)}=\frac{1}{m}\sum_{r=1}^m \delta_{\theta_r^{(t)}}$ attained by Langevin Monte Carlo, 
This is to say, we modify Algorithm \ref{alg:discrete-time-efp} by replacing  $\hat{\mu}^{(t)}$ with an empirical distribution $\nu^{(t)}$.
If $\| h_i \|_\infty \leq B$ and $\ell_i$ is $C$-Lipschitz continuous, Pinsker's inequality leads to the following upper-bound on $| F_0(\mu'^{(t+1)}) - F_0(\mu^{(t+1)}) |$:
\begin{align*}
\eta \gamma C B \sqrt{2\KL( \overline{\nu}^{(t)} \| \hat{\mu}^{(t)}) } 
+ \frac{\eta \gamma C}{n}\sum_{i=1}^n | \bE_{\overline{\nu}^{(t)}}[h_i]  - \bE_{\nu^{(t)}}[h_i] |, 
\end{align*}
where we set $\overline{\nu}^{(t)}= \mathrm{Law}(\theta_r^{(t)})$.
Therefore, we can estimate the particle and iteration complexities to satisfy the required precision (\ref{eq:fw-tolerance}) by applying the convergence rate of Langevin Monte Carlo \citep{vempala2019rapid} together with a standard concentration inequality as done in \citet{nitanda2020particle,oko2022psdca}.
}

\section{Experiments}
Note that our theory also guarantees the convergence of the proximal Gibbs distribution $\hat{\underline{\mu}}^{(T-1)}$ to the (regularized) global optimal solution.
Therefore, in the following, we evaluate the performance of $\hat{\underline{\mu}}^{(T-1)}$ as well as $\underline{\mu}^{(T)}$
in the applications of training two-layer neural networks as well as image synthesis using transparent triangles.

\subsection{Two-layer Neural Networks}
As previously remarked, optimizing a two-layer neural network in the mean-field regime is one important application of the EFP algorithm. We write $h(\theta,x)=\sigma( \theta^\top x)$ as one neuron with trainable parameters $\theta\in\bR^d$. 
In this setting, the mean-field model $\bE_{\mu}[h(\theta,\cdot)]$ is an infinite-width neural network, and $\frac{1}{m}\sum_{r=1}^m h(\theta_r,\cdot)$ is an $M$-particle approximation of the infinite-width limit (i.e., a neural network with $m$ hidden neurons). 

We consider a student-teacher setting where the labels are generated by a teacher network with cosine activation; we set $n=500, d=5$. The student model is a two-layer network with tanh activation and width $m=1000$. The training objective is to minimize the empirical squared error, and the regularization hyperparameters are set to $\lambda=\lambda'=0.01$. 
We optimize the neural network using EFP with an outer-loop step size $\eta\gamma=0.01$. At each iteration, we approximate the proximal Gibbs measure $\hat{\mu}_t$ via the Langevin Monte Carlo algorithm with step size $\eta'=0.01$. 

\begin{figure}[!htb]
\centering
\vspace{-1mm}
\includegraphics[width=0.8\linewidth]{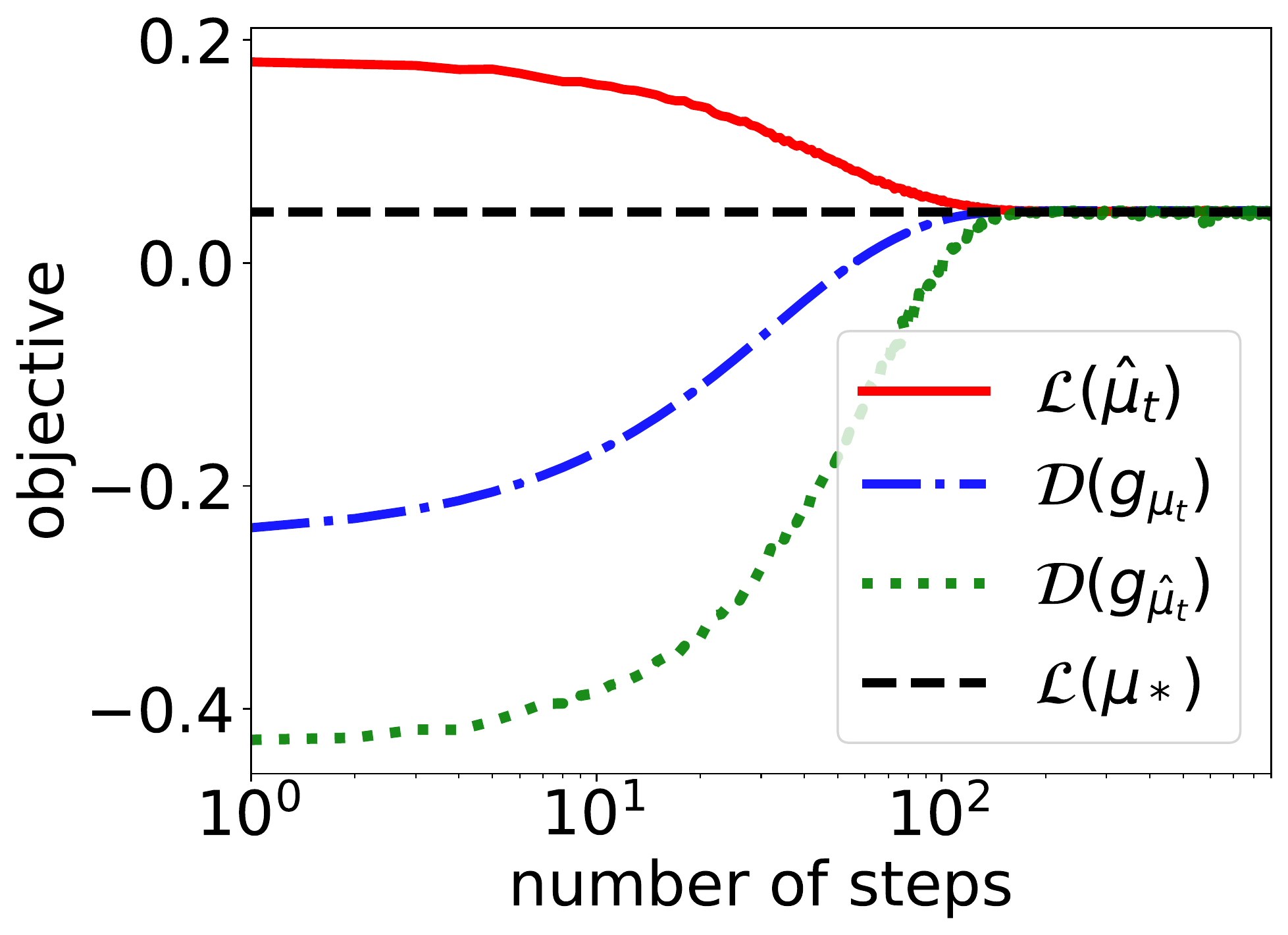}
\vspace{-2mm}
\caption{Illustration of primal-dual convergence: learning two-layer neural net with EFP ($\lambda=\lambda'=10^{-2}$).}
\label{fig:duality-gap}
\end{figure} 

In Figure~\ref{fig:duality-gap} we report the primal and dual objective values, in which the entropy term is computed by the $k$-nearest neighbors estimator \cite{kozachenko1987sample}. The optimal value $\cL(\mu_*)$ is approximated using the particle dual averaging (PDA) algorithm \cite{nitanda2020particle} which also globally optimizes the objective \eqref{prob:erm-primal}. 
Observe that the duality gap vanishes as predicted by our theoretical analysis.

\subsection{Image Synthesis with Triangles}
We also consider the image synthesis experiment in \citet{tian2022modern}, where the goal is to ``approximate'' an image via integrating transparent triangles. To draw the triangles, we utilize a differentiable render \citep{laine2020modular} that provides a differentiable map to translate the parameters $\theta$ (representing the colors and vertexes) into a transparent triangle $h(\theta)$.
Then, we can construct an image $\bE_{\theta \sim \mu}[h(\theta)]$ by integrating $h(\theta)$ with the probability distribution $\mu$.
For simple explanation, we take a $W\times H$ image of a single channel, i.e., $h(\theta)=\{h_{ij}(\theta)\}_{i,j} \in \bR^{WH}$.
We formalize the problem of approximating a given image $J \in \bR^{WH}$ as a regularized regression problem using the squared error: 
\begin{align*} 
\frac{1}{WH}\sum_{i,j}\ell_{i,j}(\bE_{\mu}[h_{ij}(\theta)]) 
= \frac{1}{WH}\sum_{i,j}(J_{i,j} - \bE_{\mu}[h_{ij}(\theta)])^2.
\end{align*}
Then, we can apply Algorithm \ref{alg:implementable-efp} to minimize this function with regularization. 
We adopt the image ``Mona Lisa'' of size $256\times256$ as the target. See Figure \ref{fig:mona_lisa} for the original target image and one generated image $H^{(T)} = \{H_{i,j}^{(t)}\}_{i,j}$ using EFP (Algorithm \ref{alg:implementable-efp}).
\begin{figure}[ht]
\center
\begin{tabular}{cc}
\begin{minipage}[t]{0.45\linewidth}
\centering
\includegraphics[width=\textwidth]{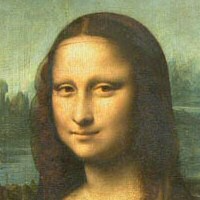} \\
\end{minipage} 
\hspace{1mm}
\begin{minipage}[t]{0.45\linewidth}
\centering
\includegraphics[width=\textwidth]{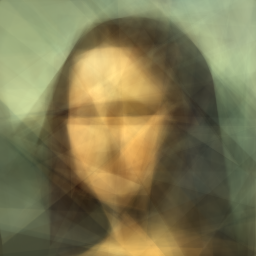}
\end{minipage} 
\end{tabular}
\caption{We run Algorithm \ref{alg:implementable-efp} with $\lambda=10^{-5},~\lambda'=10^{-4},~T=2000,~S=10,~m=1000,~\eta\cdot\gamma=0.01$ to fit the target image. As for the step size for Langevin Monte Carlo, we used cosine annealing from $0.1$ to $0.01$. The figure depicts the target image ``Mona Lisa'' (left) and an obtained image (right).}
\label{fig:mona_lisa}
\end{figure}

In Figure \ref{fig:mona_lisa_comparison} we compare the generated images $H^{(T)}$ with $\frac{1}{m}\sum_{r=1}^m h(\theta_r^{(T-1)})$ using different number of triangles of $m=200$ and $m=1000$; we also report the generated image under both $\underline{\mu}^{(T)}$ and the Gibbs distribution, $\hat{\underline{\mu}}^{(T-1)}$ as explained in Section \ref{subsec:algorithmic-advantage}.
Figure \ref{fig:mona_lisa_comparison} depicts these generated images.
Finally, we plot the squared error of each image during optimization in Figure 
 \ref{fig:loss_comparison}.
\begin{figure}[ht]
\center
\begin{tabular}{cccc}
\begin{minipage}[t]{0.23\linewidth}
\centering
\includegraphics[width=\textwidth]{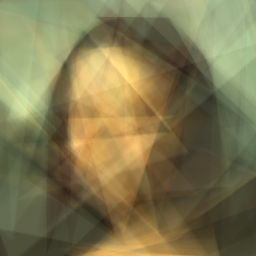} \\
{\scriptsize $\underline{\mu}^{(T)}$}
\end{minipage} 
\begin{minipage}[t]{0.23\linewidth}
\centering
\includegraphics[width=\textwidth]{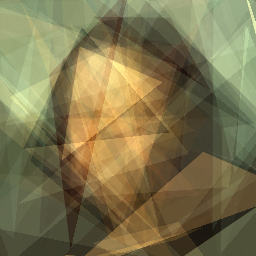}
{\scriptsize $\hat{\underline{\mu}}^{(T-1)}$}
\end{minipage} 
\hspace{1.5mm}
\begin{minipage}[t]{0.23\linewidth}
\centering
\includegraphics[width=\textwidth]{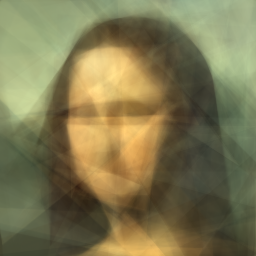} \\
{\scriptsize $\underline{\mu}^{(T)}$}
\end{minipage} 
\begin{minipage}[t]{0.23\linewidth}
\centering
\includegraphics[width=\textwidth]{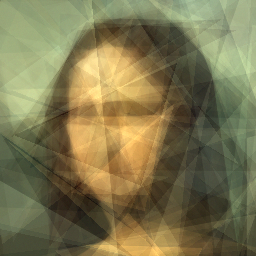}
{\scriptsize $\hat{\underline{\mu}}^{(T-1)}$}
\end{minipage} 
\end{tabular}
\vspace{-2mm}
\caption{Comparison of images generated by $\underline{\mu}^{(T)}$ and $\hat{\underline{\mu}}^{(T-1)}$. We set $m=200$ for the first two figures on the left, and $m=1000$ for the two on the right. In both cases, the left and right figures correspond to images by $\underline{\mu}^{(T)}$ and $\hat{\underline{\mu}}^{(T-1)}$, respectively.}
\label{fig:mona_lisa_comparison}
\end{figure}

\begin{figure}[htb]
\center
\begin{tabular}{cc}
\begin{minipage}[t]{0.47\linewidth}
\centering
\includegraphics[width=\textwidth]{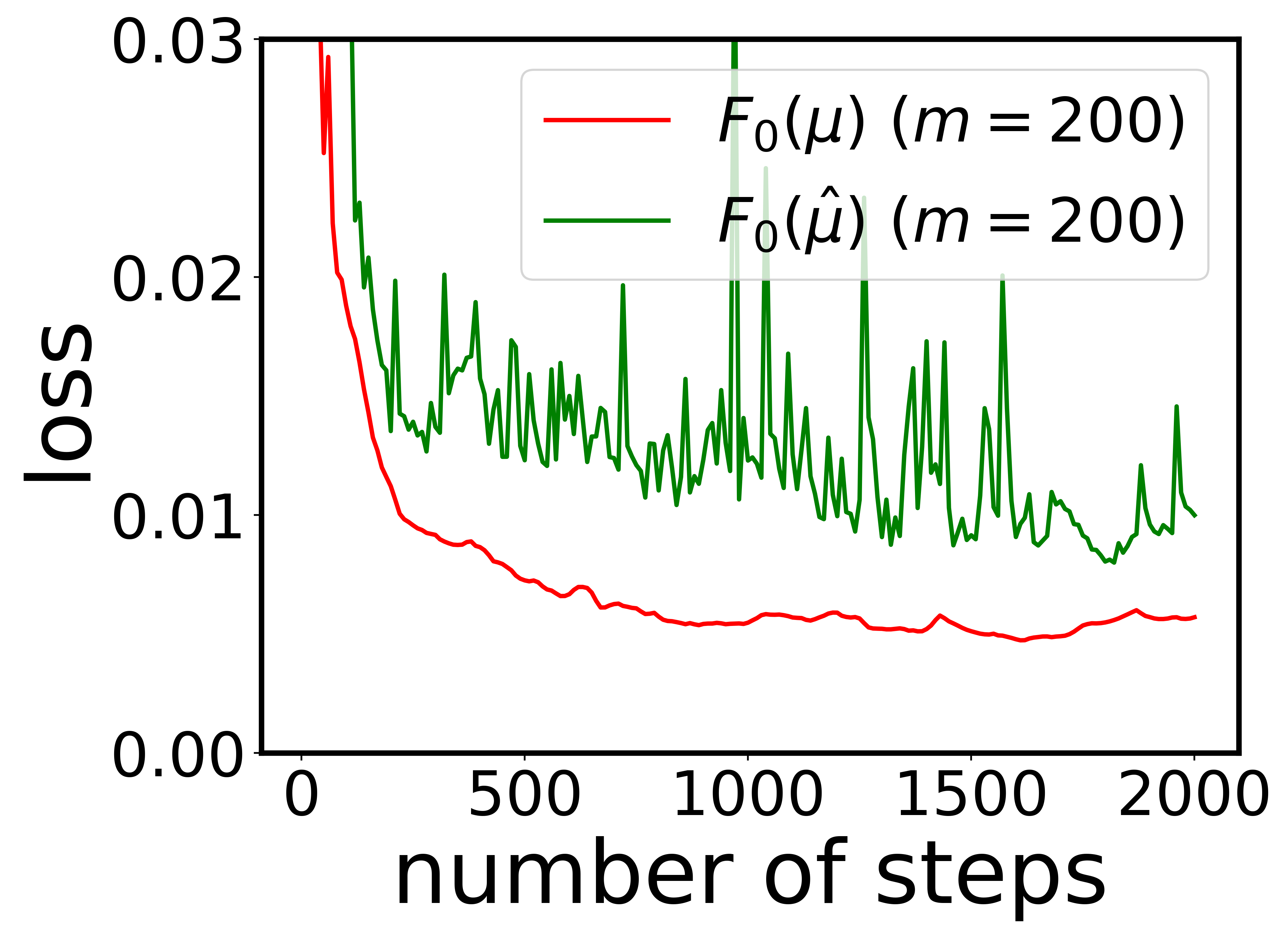} \\
\end{minipage} 
\begin{minipage}[t]{0.47\linewidth}
\centering
\includegraphics[width=\textwidth]{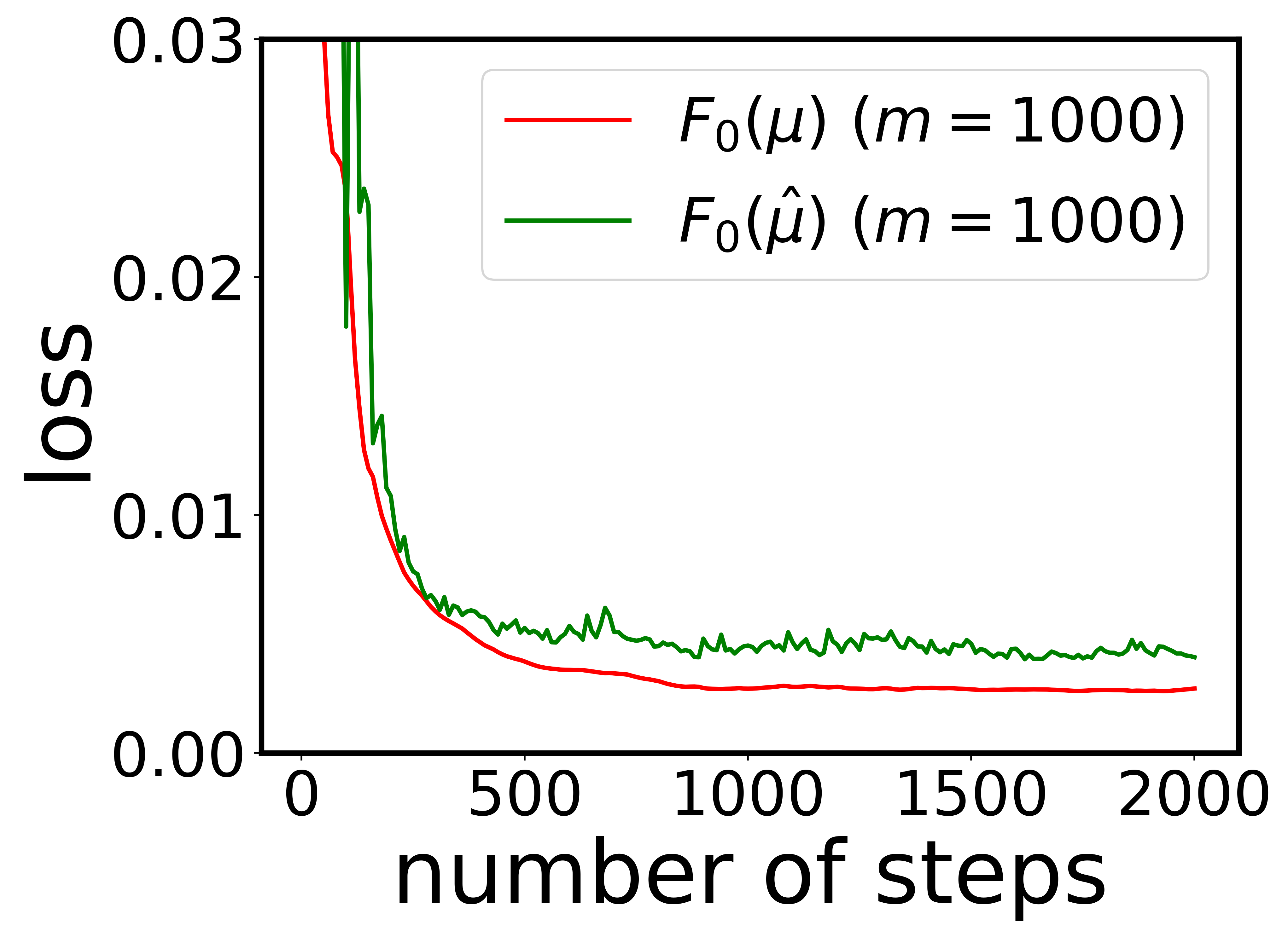} 
\end{minipage} 
\end{tabular}
\vspace{-2mm}
\caption{Squared error achieved by $\underline{\mu}^{(T)}$ and $\hat{\underline{\mu}}^{(T-1)}$ with the number of triangles (particle size) $m=200,1000$.}
\label{fig:loss_comparison}
\end{figure} 

We observe that EFP is capable of generating reasonable images. Moreover, as the number of particle $m$ increases, the quality of images generated by both $\underline{\mu}^{(T)}$ and $\hat{\underline{\mu}}^{(T-1)}$ improve and the discrepancy between them becomes smaller. 
This is consistent with our theory, which guarantees the convergence of both distributions to the optimal solution when  $m$ becomes large.

\section*{Conclusion}
In this paper, we presented a concise primal-dual analysis of the entropic fictitious play (EFP) algorithm for the finite-sum minimization problem. Specifically, we showed the convergence in both continuous- and discrete-time in terms of the primal and dual objectives, which is equivalent to the convergence of the KL divergence between $\mu$ and $\hat{\mu}$. Our analysis also leads to an efficient implementation of EFP which resolves the huge memory consumption issue. Moreover, we introduced a gradient-boosting perspective of EFP which gives a new discrete-time global convergence analysis; under this perspective, EFP can be regarded as the memory-efficient gradient-boosting method performing on the space of probability distributions. 
Finally, we empirically verified our theory in the application of training two-layer neural networks and an image synthesis using transparent triangles.
An interesting future work is the generalization analysis for mean-field models. Especially, learnability in the classification setting is of interest for the combination of our convergence result and margin theory.


\bigskip

\section*{Acknowledgment}
AN was partially supported by JSPS Kakenhi (22H03650) and JST-PRESTO (JPMJPR1928). 
KO was partially supported by IIW program of the Univ. of Tokyo.
DW was partially supported by a Borealis AI Fellowship. 
TS was partially supported by JSPS KAKENHI (20H00576) and JST CREST.

\bibliography{ref}
\bibliographystyle{apalike}

\clearpage
\onecolumn
\renewcommand{\thesection}{\Alph{section}}
\renewcommand{\thesubsection}{\Alph{section}. \arabic{subsection}}
\renewcommand{\thetheorem}{\Alph{theorem}}
\renewcommand{\thelemma}{\Alph{lemma}}
\renewcommand{\theproposition}{\Alph{proposition}}
\renewcommand{\thedefinition}{\Alph{definition}}
\renewcommand{\thecorollary}{\Alph{corollary}}
\renewcommand{\theassumption}{\Alph{assumption}}
\renewcommand{\theexample}{\Alph{example}}

\setcounter{section}{0}
\setcounter{subsection}{0}
\setcounter{theorem}{0}
\setcounter{lemma}{0}
\setcounter{proposition}{0}
\setcounter{definition}{0}
\setcounter{corollary}{0}
\setcounter{assumption}{0}

{
\newgeometry{top=1in, bottom=1in,left=1.in,right=1.in}   

\newpage

\allowdisplaybreaks

\linewidth\hsize
{\centering \Large\bfseries Appendix: Primal and Dual Analysis of Entropic Fictitious Play for Finite-sum Problems \par}
\section{Analysis in discrete-time}\label{appendix:discrete-proof}
\begin{lemma}\label{Lemma:Discrete-BoundedRatio}
    Suppose Assumption \ref{assumption:boundedness} holds.
    Then, any two probability distributions $\mu_1,\mu_2$ satisfy
    \begin{align}\label{eq:Discrete-BoundedRatio-1}
      C_\lambda^{-1}  \leq  \frac{\hat{\mu}_1(\theta)}{\hat{\mu}_2(\theta)
      } \leq C_\lambda,
    \end{align}
    with $C_\lambda:=\exp(4/\lambda)$.

    Moreover, let $\mu_1, \mu_2$ be two probability distributions, and $\mu:=(1-t)\mu_1+t\mu_2$ be an interpolation with $t\in [0,\frac12]$. Then, we have that
    \begin{align}\label{eq:Discrete-BoundedRatio-2}
      - 4t \leq  \frac{\hat{\mu}(\theta)}{\hat{\mu}_1(\theta)} -1 \leq 4t C_\lambda.
    \end{align} 
\end{lemma}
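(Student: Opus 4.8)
The plan is to reduce both inequalities to a single mechanism: pointwise control of the difference of the potentials that define the two Gibbs densities, combined with a matching control of their normalizing constants. Recall that $\hat{\mu}(\theta) \propto \exp(-\tfrac{1}{\lambda}\frac{\delta F(\mu)}{\delta\mu}(\theta))$ and that in the finite-sum setting $\frac{\delta F(\mu)}{\delta\mu}(\theta) = \tfrac{1}{n}\sum_i \ell_i'(\bE_{\mu}[h_i])h_i(\theta) + \lambda'\|\theta\|_2^2$. The key structural observation is that the quadratic term $\lambda'\|\theta\|_2^2$ is independent of $\mu$, hence cancels in every density ratio $\hat{\mu}_1/\hat{\mu}_2$; only the data-dependent part matters. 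I would therefore write $\frac{\hat{\mu}_1(\theta)}{\hat{\mu}_2(\theta)} = \frac{Z(\mu_2)}{Z(\mu_1)}\exp(-(V_1(\theta)-V_2(\theta)))$ with $V_j(\theta) = \tfrac{1}{\lambda n}\sum_i \ell_i'(\bE_{\mu_j}[h_i])h_i(\theta)$.

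For \eqref{eq:Discrete-BoundedRatio-1}, Assumption \ref{assumption:boundedness} gives $\|h_i\|_\infty\le 1$ and $|\ell_i'|\le 1$, so each summand in $V_1-V_2$ is bounded by $1$ in absolute value and thus $|V_1(\theta)-V_2(\theta)|\le 2/\lambda$ pointwise. Integrating the resulting envelope $e^{-2/\lambda}e^{-V_2}\le e^{-V_1}\le e^{2/\lambda}e^{-V_2}$ shows $Z(\mu_1)/Z(\mu_2)\in[e^{-2/\lambda},e^{2/\lambda}]$. Multiplying the exponential factor and the normalization factor, each confined to $[e^{-2/\lambda},e^{2/\lambda}]$, yields the two-sided bound with $C_\lambda=\exp(4/\lambda)$.

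For \eqref{eq:Discrete-BoundedRatio-2} the extra ingredient is that interpolation makes the potential difference linear in $t$. Since $\bE_{\mu}[h_i] = (1-t)\bE_{\mu_1}[h_i]+t\bE_{\mu_2}[h_i]$, we have $\bE_{\mu}[h_i]-\bE_{\mu_1}[h_i] = t(\bE_{\mu_2}[h_i]-\bE_{\mu_1}[h_i])$, of magnitude at most $2t$ because $\|h_i\|_\infty\le 1$. Both arguments lie in $[-1,1]$, where $\ell_i'$ is $1$-Lipschitz (from $0\le\ell_i''\le 1$), so $|\ell_i'(\bE_{\mu}[h_i])-\ell_i'(\bE_{\mu_1}[h_i])|\le 2t$ and the potential difference $\phi:=V_\mu-V_1$ obeys $|\phi|\le 2t/\lambda$ pointwise. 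I would then use the exact identity $\frac{\hat{\mu}(\theta)}{\hat{\mu}_1(\theta)}=\frac{e^{-\phi(\theta)}}{\bE_{\hat{\mu}_1}[e^{-\phi}]}$, bound numerator and denominator by $e^{\pm 2t/\lambda}$, and finally pass from the multiplicative bounds $e^{\mp 4t/\lambda}$ to a linear-in-$t$ envelope through the elementary inequalities $1-e^{-x}\le x$ and $e^{x}-1\le x e^{x}$, using the constraint $t\le\tfrac12$ to cap the residual exponential by $C_\lambda$.

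The main obstacle is the normalizing-constant ratio: unlike the pointwise potential bound it is a global quantity, and the cleanest route is to observe that $Z(\mu_1)/Z(\mu)=\bE_{\hat{\mu}_1}[e^{-\phi}]^{-1}$, so that it inherits exactly the same $e^{\pm 2t/\lambda}$ control as the pointwise factor rather than requiring a separate estimate. The second delicate point is the bookkeeping of the $\lambda$-dependence through the linearization step, so that the final envelope is expressed via $C_\lambda$; here it is precisely the restriction $t\le\tfrac12$ that keeps every exponent at or below $4/\lambda$ and lets the exponential bounds collapse to the stated linear form.
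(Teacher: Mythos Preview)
Your decomposition and the intermediate multiplicative bound $e^{-4t/\lambda}\le \hat\mu(\theta)/\hat\mu_1(\theta) \le e^{4t/\lambda}$ coincide with the paper's argument; the gap is in the final linearization. With $x=4t/\lambda$, the inequality $e^{x}-1\le xe^{x}$ yields
\[
\frac{\hat\mu(\theta)}{\hat\mu_1(\theta)} - 1 \;\le\; \frac{4t}{\lambda}\,e^{4t/\lambda}\;\le\;\frac{4t}{\lambda}\,C_\lambda,
\]
and $1-e^{-x}\le x$ yields $\hat\mu/\hat\mu_1 \ge 1-4t/\lambda$. Both bounds carry a stray factor $1/\lambda$ that the constraint $t\le\tfrac12$ does not remove: that constraint only caps the \emph{exponent} $4t/\lambda$ by $4/\lambda$, not the linear coefficient $4t/\lambda$ by $4t$. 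So as written you do not reach the constants in \eqref{eq:Discrete-BoundedRatio-2}.

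For the upper bound the paper instead invokes the direct inequality $e^{4t/\lambda}\le 1+4t\,e^{4/\lambda}$. This is not obtained from $e^x-1\le xe^x$ but follows, for instance, from convexity of $s\mapsto e^{4s/\lambda}$ on $[0,1]$:
\[
e^{4t/\lambda}\le (1-t)\cdot 1 + t\,e^{4/\lambda}\;\le\; 1+4t\,C_\lambda,
\]
which avoids generating the $1/\lambda$ factor in the first place. For the lower bound the paper simply asserts $e^{-4t/\lambda}\ge 1-4t$; your inequality $1-e^{-x}\le x$ does not reproduce that claim, and in fact for small $\lambda$ (e.g.\ $\lambda=10^{-2}$, $t=0.1$) one has $e^{-4t/\lambda}\ll 1-4t$, so the lower bound in \eqref{eq:Discrete-BoundedRatio-2} as stated seems to require an additional $\lambda$-dependent factor.
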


\begin{proof}
    Because of $\|h_i\|_\infty \leq 1$ and $|(g_\mu)_i|=|\ell'_i(x)| \leq 1$, the conclusion follows immediately by the following formulation:
   \begin{align*}
        \frac{\hat{\mu}_1(\theta)}{\hat{\mu}_2(\theta)}= \frac{\exp\left( - \frac{1}{\lambda} \left( \frac{1}{n}\sum_{i=1}^n h_{i}(\theta)g_{\mu_1,i} 
+ \lambda'\|\theta\|_2^2 \right) \right)}{\exp\left( - \frac{1}{\lambda} \left( \frac{1}{n}\sum_{i=1}^n h_{i}(\theta)g_{\mu_2,i}
+ \lambda'\|\theta\|_2^2 \right) \right)}
\frac{\int\exp\left( - \frac{1}{\lambda} \left( \frac{1}{n}\sum_{i=1}^n h_{i}(\theta')g_{\mu_2,i}
+ \lambda'\|\theta'\|_2^2 \right)\mathrm{d}\theta'\right)}{\int\exp\left( - \frac{1}{\lambda} \left( \frac{1}{n}\sum_{i=1}^n h_{i}(\theta'')g_{\mu_1,i}
+ \lambda'\|\theta''\|_2^2 \right)\mathrm{d}\theta''\right)}.
    \end{align*}

    In the same way, to bound $\frac{\hat{\mu}(\theta)}{\hat{\mu_1}(\theta)}$, we consider the density ratio:
    \begin{align}\label{eq:Discrete-BoundedRatio-3}
        \frac{\hat{\mu}(\theta)}{\hat{\mu}_1(\theta)}= \frac{\exp\left( - \frac{1}{\lambda} \left( \frac{1}{n}\sum_{i=1}^n h_{i}(\theta)g_{\mu,i}
+ \lambda'\|\theta\|_2^2 \right) \right)}{\exp\left( - \frac{1}{\lambda} \left( \frac{1}{n}\sum_{i=1}^n h_{i}(\theta)g_{\mu_1,i} 
+ \lambda'\|\theta\|_2^2 \right) \right)}
\frac{\int\exp\left( - \frac{1}{\lambda} \left( \frac{1}{n}\sum_{i=1}^n h_{i}(\theta')g_{\mu_1,i}
+ \lambda'\|\theta'\|_2^2 \right)\mathrm{d}\theta'\right)}{\int\exp\left( - \frac{1}{\lambda} \left( \frac{1}{n}\sum_{i=1}^n h_{i}(\theta'')g_{\mu,i}
+ \lambda'\|\theta''\|_2^2 \right)\mathrm{d}\theta''\right)}.
    \end{align}
    Note that 
      \begin{align*}
          |g_{\mu,i} - g_{\mu_1,i}|&=\left|\ell'_i\left(\int h_i(\theta)\mathrm{d}\mu(\theta)\right) - \ell'_i\left(\int h_i(\theta)\mathrm{d}\mu_1(\theta)\right)\right|\\ & \leq \left|\int h_i(\theta)\mathrm{d}\mu(\theta) - \int h_i(\theta)\mathrm{d}\mu_1(\theta)\right|
          \\ & =
          t\left|\int h_i(\theta)\mathrm{d}\mu_2(\theta) - \int h_i(\theta)\mathrm{d}\mu_1(\theta)\right|
          \\ & \leq 2t.
      \end{align*}  
      Therefore, the equation (\ref{eq:Discrete-BoundedRatio-3}) is upper and lower bounded by $\exp(4t/\lambda)$ and $\exp(-4t/\lambda)$, respectively.
      Note that we have $\exp(4t/\lambda) \leq 1+4t\exp(4/\lambda)$ and $\exp(-4t/\lambda) \geq 1-4t$ if $0\leq t\leq \frac14$ holds, which concludes the proof.
\end{proof}
\begin{lemma}\label{Lemma:Discrete-FirstPhase}
    Suppose Assumption \ref{assumption:boundedness} and $\gamma \eta <1$ hold.
    Then, for $t\geq t_0:=\lceil\frac{1}{\gamma \eta}\rceil$, we get $\frac{1}{2C_\lambda}\hat{\mu}^{(t)} \leq \mu^{(t)}$. 
\end{lemma}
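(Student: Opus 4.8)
The plan is to unroll the EFP recursion and then exploit the uniform bound on ratios of proximal Gibbs densities furnished by Lemma~\ref{Lemma:Discrete-BoundedRatio}. First I would expand the update $\mu^{(t+1)}=(1-\eta\gamma)\mu^{(t)}+\eta\gamma\hat{\mu}^{(t)}$ by induction to obtain the explicit representation
\[
\mu^{(t)} = (1-\eta\gamma)^t \mu^{(0)} + \eta\gamma\sum_{s=0}^{t-1}(1-\eta\gamma)^{t-1-s}\hat{\mu}^{(s)}.
\]
Since $\gamma\eta<1$, every coefficient $(1-\eta\gamma)^{t-1-s}$ is nonnegative, so this is a genuine convex combination of $\mu^{(0)}$ and the past Gibbs measures; in particular I may discard the leading term $(1-\eta\gamma)^t\mu^{(0)}\geq 0$ to get a pointwise lower bound on $\mu^{(t)}$ in terms of the $\hat{\mu}^{(s)}$ alone.

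The key step is then to replace each historical $\hat{\mu}^{(s)}$ by the current $\hat{\mu}^{(t)}$. Applying inequality~\eqref{eq:Discrete-BoundedRatio-1} with $\mu_1=\mu^{(s)}$ and $\mu_2=\mu^{(t)}$ gives $\hat{\mu}^{(s)}(\theta)\geq C_\lambda^{-1}\hat{\mu}^{(t)}(\theta)$ uniformly in $\theta$ for every $s$. Substituting this bound and summing the resulting geometric series,
\[
\eta\gamma\sum_{s=0}^{t-1}(1-\eta\gamma)^{t-1-s} = 1-(1-\eta\gamma)^t,
\]
yields the clean estimate $\mu^{(t)}(\theta)\geq C_\lambda^{-1}\bigl(1-(1-\eta\gamma)^t\bigr)\hat{\mu}^{(t)}(\theta)$.

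It then remains only to check that the scalar factor $1-(1-\eta\gamma)^t$ exceeds $\tfrac12$ once $t\geq t_0$. For $t\geq t_0=\lceil 1/(\gamma\eta)\rceil$ we have $\gamma\eta\,t\geq 1$, so the elementary inequality $1-x\leq e^{-x}$ gives $(1-\eta\gamma)^t\leq e^{-\gamma\eta t}\leq e^{-1}<\tfrac12$. Hence $C_\lambda^{-1}(1-(1-\eta\gamma)^t)\geq \tfrac{1}{2C_\lambda}$, which is exactly the claimed bound $\frac{1}{2C_\lambda}\hat{\mu}^{(t)}\leq\mu^{(t)}$.

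I do not anticipate a serious obstacle: the argument is essentially a geometric-sum computation once the density-ratio control of Lemma~\ref{Lemma:Discrete-BoundedRatio} is in place, and that lemma is precisely what lets us compare $\hat{\mu}$ across different iterates. The only points requiring minor care are using $\gamma\eta<1$ to justify both the convex-combination structure and the discarding of the initial term, and verifying the numerical fact $e^{-1}<\tfrac12$ that makes the threshold $t_0$ tight enough to absorb the factor of two.
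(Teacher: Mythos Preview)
Your proposal is correct and follows essentially the same route as the paper: unroll the recursion, drop the nonnegative initial term, replace each $\hat{\mu}^{(s)}$ by $C_\lambda^{-1}\hat{\mu}^{(t)}$ via Lemma~\ref{Lemma:Discrete-BoundedRatio}, and sum the geometric series. The only cosmetic difference is in the final numerical check: the paper phrases it as $t_0\geq \log_{1-\gamma\eta}(1/2)$, whereas you use $(1-\eta\gamma)^t\leq e^{-\gamma\eta t}\leq e^{-1}<1/2$, which is an equivalent verification.
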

\begin{proof}
    First, we observe that
    \begin{align}\nonumber
        \mu^{(t)} & = (1-\gamma \eta) \mu^{(t-1)} + \eta \gamma \hat{\mu}^{(t-1)}
        \\ & = \cdots =  (1-\gamma \eta)^t \mu^{(0)} + \eta \gamma \sum_{s=0}^{t-1} (1-\gamma \eta)^{t-s-1}\hat{\mu}^{(s)}.
        \label{eq:Discrete-FirstPhase-1}
    \end{align}
    According to the bounds \eqref{eq:Discrete-BoundedRatio-1} of \cref{Lemma:Discrete-BoundedRatio}, $\hat{\mu}^{s}(\theta) \geq C_\lambda^{-1}\hat{\mu}^{(t)}$ holds for each $s$ and $\theta$.
    Therefore, \eqref{eq:Discrete-FirstPhase-1} yields that
    \begin{align*}
        \mu^{(t)} \geq \eta \gamma \sum_{s=0}^{t-1} (1-\gamma \eta)^{t-s-1}C_\lambda^{-1}\hat{\mu}^{(t)}
        \geq (1-(1-\gamma \eta)^{t})C_\lambda^{-1}\hat{\mu}^{(t)}.
    \end{align*}
    By letting $t\geq t_0:=\lceil\frac{1}{\gamma \eta}\rceil \geq \log_{1-\gamma \eta} \frac{1}{2}$, we obtain that $\mu^{(t)} \geq \frac{1}{2C_\lambda}\hat{\mu}^{(t)}$.
    
\end{proof}
We then present the analysis for each iteration of \cref{alg:discrete-time-efp} for the primal and dual objectives.
\begin{lemma}\label{lemma:Discrete-Onestep}
Suppose Assumption \ref{assumption:boundedness} and $\gamma \eta \leq \frac{1}{8}$ hold.
Then, for $t\geq t_0:=\lceil\frac{1}{\gamma \eta}\rceil$, we get
\begin{align*}
\cL( \mu^{(t+1)}) - \cL( \mu^{(t)}) \leq 
    -\lambda \gamma\eta \KL( \mu^{(t)} \| \hat{\mu}^{(t)} )
    +\gamma\eta^2(3C_\lambda+5)
    .
\end{align*}
\end{lemma}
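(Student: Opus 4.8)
The plan is to mimic the continuous-time computation from the proof of Theorem~\ref{theorem:primal_convergence}, but now account for the discretization error introduced by the explicit Euler step. Following the one-step interpolation idea sketched before the lemma statement, I would set $\nu_0 = \mu^{(t)}$ and $\nu_s = \mu^{(t)} + s\gamma(\hat\mu^{(t)} - \mu^{(t)})$ for $s \in [0,\eta]$, so that $\nu_\eta = \mu^{(t+1)}$, and then write the total change via the fundamental theorem of calculus:
\[
\cL(\mu^{(t+1)}) - \cL(\mu^{(t)}) = \int_0^\eta \frac{\rd}{\rd s}\cL(\nu_s)\,\rd s.
\]
Using Proposition~\ref{prop:optimization_gap_for_strong_convex_problems}(1), $\frac{\delta\cL}{\delta\mu}(\nu_s) = \lambda\log\frac{\rd\nu_s}{\rd\hat\nu_s}$, so the integrand becomes $\lambda\gamma\int \log\frac{\rd\nu_s}{\rd\hat\nu_s}\,\rd(\hat\nu_0 - \nu_0)$. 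The goal is to show this integrand is, up to a controlled error, equal to the ideal continuous-time dissipation $-\lambda\gamma(\KL(\nu_0\|\hat\nu_0) + \KL(\hat\nu_0\|\nu_0))$.

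The key decomposition I would carry out is to split $\log\frac{\rd\nu_s}{\rd\hat\nu_s} = \log\frac{\rd\nu_s}{\rd\nu_0} + \log\frac{\rd\nu_0}{\rd\hat\nu_0} + \log\frac{\rd\hat\nu_0}{\rd\hat\nu_s}$. The middle term, when integrated against $\rd(\hat\nu_0 - \nu_0)$, recovers exactly the continuous-time dissipation term $-(\KL(\nu_0\|\hat\nu_0) + \KL(\hat\nu_0\|\nu_0)) \le -\KL(\mu^{(t)}\|\hat\mu^{(t)})$, which produces the main $-\lambda\gamma\eta\KL(\mu^{(t)}\|\hat\mu^{(t)})$ term after integrating $\rd s$ over $[0,\eta]$. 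The remaining two terms are discretization errors that must be bounded by $O(\gamma\eta^2)$ in absolute value. For these I would use Lemma~\ref{Lemma:Discrete-BoundedRatio}: the density ratios $\frac{\hat\nu_s}{\hat\nu_0}$ are controlled by \eqref{eq:Discrete-BoundedRatio-2} since $\nu_s$ is itself an interpolation with interpolation parameter $s\gamma \le \eta\gamma \le \frac18 \le \frac12$, giving $\big|\frac{\hat\nu_0}{\hat\nu_s} - 1\big| = O(s\gamma)$, and similarly $\log\frac{\rd\nu_s}{\rd\nu_0}$ is a first-order term in $s\gamma$. Boundedness of $h_i$ and $\ell_i'$ under Assumption~\ref{assumption:boundedness} keeps all first-variation differences, and hence the log-ratios, uniformly bounded, so that the product of an $O(s\gamma)$ log-ratio against the bounded measure $\hat\nu_0 - \nu_0$ yields an $O(s\gamma)$ integrand, which integrates over $s\in[0,\eta]$ to $O(\gamma\eta^2)$.

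The main obstacle I anticipate is making the error estimate fully rigorous rather than merely $O(\gamma\eta^2)$: one must extract the explicit constant $3C_\lambda + 5$. This requires carefully tracking each log-ratio against the correct reference measure and using Lemma~\ref{Lemma:Discrete-FirstPhase} (valid precisely for $t \ge t_0$, which is why the hypothesis $t \ge t_0$ appears) to ensure $\mu^{(t)} \ge \frac{1}{2C_\lambda}\hat\mu^{(t)}$, so that the log density ratio $\log\frac{\rd\mu^{(t)}}{\rd\hat\mu^{(t)}}$ is bounded below uniformly and the interpolation $\nu_s$ stays away from degeneracy. The factor $C_\lambda = \exp(4/\lambda)$ enters through the upper bound in \eqref{eq:Discrete-BoundedRatio-2}, explaining its appearance in the constant. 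A secondary technical care is justifying the differentiation of $\hat\nu_s$ in $s$ (i.e., controlling $\frac{\rd}{\rd s}\log\frac{\rd\hat\nu_s}{\rd\theta}$), which again reduces to boundedness of $h_i$ and the Lipschitz/boundedness of $\ell_i'$, together with the fact that $\frac{\rd}{\rd s}g_{\nu_s,i}$ involves $\ell_i''$ bounded by $1$. Once these pointwise bounds are assembled, combining them and integrating over $[0,\eta]$ delivers the stated inequality.
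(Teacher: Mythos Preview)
Your proposal is correct and follows essentially the same route as the paper's proof: the same one-step interpolation $\nu_s$, the same three-term decomposition of $\log\frac{\rd\nu_s}{\rd\hat\nu_s}$, and the same use of Lemma~\ref{Lemma:Discrete-BoundedRatio} and Lemma~\ref{Lemma:Discrete-FirstPhase} to bound the two error terms before integrating over $s\in[0,\eta]$. Your identification of where each auxiliary lemma enters and why the hypothesis $t\ge t_0$ is needed matches the paper exactly.
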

\begin{proof}
We consider the following continuous interpolation between $\mu^{(t)}$ and $\mu^{(t+1)}$. Let $\nu_0 = \mu^{(t)}$, and $\nu_\eta = \nu_0 + \gamma \eta(\hat{\nu}_0 -\nu_0) = \mu^{(t)} + \eta\gamma (\hat{\mu}^{(t)} -\mu^{(t)})$. 
We then revisit the analysis on primal convergence (\cref{theorem:primal_convergence}).
\begin{align}
    \frac{\rd }{\rd \eta}\cL(\nu_\eta) 
    \nonumber &= \int \frac{\delta \cL}{\delta \mu}(\nu_\eta)(\theta)\frac{\rd \nu_\eta}{\rd \eta}(\theta) \\
    \nonumber &= \int \lambda\gamma \log \frac{\rd\nu_\eta}{\rd \hat{\nu}_\eta}(\theta) \rd (\hat{\nu}_0 -\nu_0)(\theta) \\    
   &= \int \lambda\gamma \log \frac{\rd\nu_0}{\rd \hat{\nu}_0}(\theta) \rd (\hat{\nu}_0 -\nu_0)(\theta)
     + \int \lambda\gamma \log \frac{\rd\nu_\eta}{\rd \nu_0}(\theta) \rd (\hat{\nu}_0 -\nu_0)(\theta)
   + \int \lambda\gamma \log \frac{\rd \hat{\nu}_0}{\rd \hat{\nu}_\eta}(\theta) \rd (\hat{\nu}_0 -\nu_0)(\theta)
   \label{eq:Discrete-Onestep-1}
\end{align}
The first term is equal to $-\lambda \gamma \left( \KL( \nu_0 \| \hat{\nu}_0 ) + \KL( \hat{\nu}_0 \| \nu_0 ) \right)$ as previously.

To bound the second term, we bound the density ratio $\frac{\rd\nu_\eta}{\rd \nu_0}(\theta)$ as
\begin{align*}
     \frac{\rd\nu_\eta}{\rd \nu_0}(\theta) = \frac{\rd (\nu_0 - \gamma \eta (\nu_0-\hat{\nu}_0)) }{\rd \nu_0}(\theta)
     = \frac{(1-\gamma \eta )\rd \mu^{(t)} }{\rd \mu^{(t)}}(\theta) + \gamma \eta\frac{\rd \hat{\mu}^{(t)} }{\rd \mu^{(t)}}(\theta)
     \begin{cases}
     \geq 1-\gamma \eta ,\\
     \leq  (1-\gamma \eta) + 2C_\lambda \gamma \eta
     = 1 + \gamma\eta (2C_\lambda-1),
     \end{cases}
\end{align*}
where we used \cref{Lemma:Discrete-FirstPhase} the upper bound.
Therefore, we have that
\begin{align*}
  \int \lambda\gamma \log \frac{\rd\nu_\eta}{\rd \nu_0}(\theta) \rd (\hat{\nu}_0 -\nu_0)(\theta)
  \leq \log (1 + \gamma\eta (2C_\lambda-1)) -\log (1-\gamma \eta)\leq \gamma\eta(2C_\lambda+2) ,
\end{align*}
where we used that $\log (1+x) \leq x$ and $\log (1-x) \geq 2x\ (0 \leq x\leq \frac12)$ for the last inequality.

The third term is bounded in a similar manner.
The bounds \eqref{eq:Discrete-BoundedRatio-2} from \cref{Lemma:Discrete-BoundedRatio} yield that
\begin{align*}
     \int \lambda\gamma \log \frac{\rd \hat{\nu}_0}{\rd \hat{\nu}_\eta}(\theta) \rd (\hat{\nu}_0 -\nu_0)(\theta)
    \leq \log (1+4\gamma\eta C_\lambda)
     -\log (1-4\gamma\eta ).
\end{align*}
RHS is further bounded by $4\gamma\eta (C_\lambda+2)$, where we used the fact that $\log (1+x) \leq x$ and $\log (1-x) \geq 2x\ (0 \leq x\leq \frac12)$.

Putting it all together, \eqref{eq:Discrete-Onestep-1} is bounded by 
\begin{align*}
   \frac{\rd }{\rd \eta}\cL(\nu_\eta) \leq  -\lambda \gamma \left( \KL( \nu_0 \| \hat{\nu}_0 ) + \KL( \hat{\nu}_0 \| \nu_0 ) \right)
    +\gamma\eta(6C_\lambda+10)
    \leq 
    -\lambda \gamma\KL( \mu^{(t)} \| \hat{\mu}^{(t)} )+\gamma\eta(6C_\lambda+10).
\end{align*}
By integrating LHS and RHS, we obtain that
\begin{align*}
   \cL(\nu_\eta)-\cL(\nu_0)
 \leq 
     -\lambda \gamma\eta \KL( \mu^{(t)} \| \hat{\mu}^{(t)} ) +\gamma\eta^2(3C_\lambda+5),
\end{align*}
and $\cL( \mu^{(t+1)}) - \cL( \mu^{(t)})=\cL(\nu_\eta)-\cL(\nu_0)$ concludes the proof.
\end{proof}
\begin{lemma}\label{lemma:Discrete-Onestep-Dual}
Suppose Assumption \ref{assumption:boundedness} and $\gamma \eta \leq \frac{1}{8}$ hold.
Then, for $t\geq t_0:=\lceil\frac{1}{\gamma \eta}\rceil$, we get
\begin{align*}
-\cD(g_{\mu^{(t+1)}}) +\cD(g_{\mu^{(t)}}) \leq 
    2\gamma \eta^2 (1+2C_\lambda)
    .
\end{align*}
\end{lemma}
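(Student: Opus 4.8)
The plan is to mirror the one-step interpolation used in the proof of Lemma~\ref{lemma:Discrete-Onestep}, but now track the \emph{dual} objective along the same path. Set $\nu_0 = \mu^{(t)}$ and $\nu_s = (1-\gamma s)\mu^{(t)} + \gamma s\,\hat{\mu}^{(t)}$ for $s \in [0,\eta]$, so that $\nu_\eta = \mu^{(t+1)}$ and hence $g_{\nu_0} = g_{\mu^{(t)}}$, $g_{\nu_\eta} = g_{\mu^{(t+1)}}$. The first step is to write
\[
-\cD(g_{\mu^{(t+1)}}) + \cD(g_{\mu^{(t)}}) = \int_0^\eta \left( -\frac{\rd}{\rd s}\cD(g_{\nu_s}) \right)\rd s,
\]
reducing the claim to an integrated bound on the instantaneous rate $-\frac{\rd}{\rd s}\cD(g_{\nu_s})$.

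The second step computes this rate exactly as in the continuous-time argument of Theorem~\ref{theorem:dual-convergence}. Writing $a_i := \bE_{\hat{\mu}^{(t)}}[h_i] - \bE_{\mu^{(t)}}[h_i]$, linearity of the interpolation gives $\frac{\rd}{\rd s}\ell_i'(\bE_{\nu_s}[h_i]) = \gamma\,\ell_i''(\bE_{\nu_s}[h_i])\,a_i$; using $\ell_i^{*\prime} = (\ell_i')^{-1}$ and $q_{g_{\nu_s}} \propto \hat{\nu}_s$, the $\lambda$-factors cancel and one obtains
\[
-\frac{\rd}{\rd s}\cD(g_{\nu_s}) = \frac{\gamma}{n}\sum_{i=1}^n \left( \bE_{\nu_s}[h_i] - \bE_{\hat{\nu}_s}[h_i] \right) \ell_i''(\bE_{\nu_s}[h_i])\, a_i.
\]
At $s=0$ this collapses to the non-positive perfect square $-\frac{\gamma}{n}\sum_i a_i^2 \ell_i''$ that drives monotonicity in continuous time; the key point is that for $s>0$ the discretization freezes the direction $\hat{\nu}_0 = \hat{\mu}^{(t)}$ while both $\nu_s$ and its Gibbs measure $\hat{\nu}_s$ have moved, so the square is only approximate.

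The third step quantifies this slippage through the decomposition
\[
\bE_{\nu_s}[h_i] - \bE_{\hat{\nu}_s}[h_i] = -a_i + \gamma s\, a_i + \big( \bE_{\hat{\nu}_0}[h_i] - \bE_{\hat{\nu}_s}[h_i] \big),
\]
where $\bE_{\nu_s}[h_i] - \bE_{\nu_0}[h_i] = \gamma s\, a_i$ by linearity. The $-a_i$ term reproduces the non-positive square, which I discard for an upper bound. The $\gamma s\, a_i$ term is controlled by Assumption~\ref{assumption:boundedness} ($\|h_i\|_\infty \le 1$ gives $|a_i|\le 2$, and $0\le\ell_i''\le 1$ on $[-1,1]\ni\bE_{\nu_s}[h_i]$). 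For the last term I apply \eqref{eq:Discrete-BoundedRatio-2} of Lemma~\ref{Lemma:Discrete-BoundedRatio} to $\nu_s = (1-\gamma s)\nu_0 + \gamma s\,\hat{\nu}_0$ with interpolation weight $\gamma s \le \gamma\eta \le \tfrac18$, giving $\big|\tfrac{\hat{\nu}_s}{\hat{\nu}_0}-1\big| \le 4\gamma s\, C_\lambda$ and hence $|\bE_{\hat{\nu}_0}[h_i] - \bE_{\hat{\nu}_s}[h_i]| \le 4\gamma s\, C_\lambda$. Collecting the two error pieces yields $-\frac{\rd}{\rd s}\cD(g_{\nu_s}) \le 4\gamma^2 s\,(1+2C_\lambda)$, and integrating over $s\in[0,\eta]$ produces $2\gamma^2\eta^2(1+2C_\lambda)$, which is the stated bound once one uses $\gamma^2\le\gamma$ in the regime of interest.

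The main obstacle is this last error term: controlling how far the proximal Gibbs distribution $\hat{\nu}_s$ drifts from the frozen $\hat{\nu}_0 = \hat{\mu}^{(t)}$ as one moves along the interpolation. This is precisely the role of the uniform density-ratio estimate \eqref{eq:Discrete-BoundedRatio-2}, and it is the sole source of the exponential constant $C_\lambda = \exp(4/\lambda)$; everything else reduces to the boundedness Assumption~\ref{assumption:boundedness} together with the $\lambda$-cancellation already established in the proof of Theorem~\ref{theorem:dual-convergence}. Notably, unlike the primal Lemma~\ref{lemma:Discrete-Onestep}, this argument never invokes the density ratio $\hat{\mu}^{(t)}/\mu^{(t)}$, so the hypothesis $t\ge t_0$ is inherited only for compatibility with the companion primal estimate in the final combination.
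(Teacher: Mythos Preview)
Your proof is correct and follows essentially the same route as the paper: the same one-step interpolation $\nu_s$, the same derivative identity for $-\frac{\rd}{\rd s}\cD(g_{\nu_s})$ via $\ell_i^{*\prime}=(\ell_i')^{-1}$ and $q_{g_{\nu_s}}\propto\hat\nu_s$, and the same split into a non-positive square plus drift terms controlled by $\|h_i\|_\infty\le 1$, $0\le\ell_i''\le 1$, and the density-ratio estimate \eqref{eq:Discrete-BoundedRatio-2}. Your remark that the hypothesis $t\ge t_0$ is never invoked in this dual step (and your more careful tracking of the extra $\gamma$ factor before integration) are both correct observations.
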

\begin{proof}
    Let $\nu_\eta$ be defined in the same way as previously.
    We consider the evolution of $\frac{\rd }{\rd \eta} \cD(g_{\nu_\eta}) $ based on the analysis for Theorem \ref{theorem:dual-convergence}.
    We have that
\begin{align}\nonumber
    -\frac{\rd }{\rd \eta} \cD(g_{\nu_\eta}) 
   & = -\nabla \cD(g_{\nu_\eta})^\top \frac{\rd g_{\nu_\eta}}{\rd \nu} \\
\nonumber    &= \frac{1}{n}\sum_{i=1}^n \left( \ell_i^{*\prime}(g_{\nu_\eta,i}) - \frac{1}{\int q_{g_{\nu_\eta}}(\theta)\rd\theta} \int h_i(\theta) q_{g_{\nu_\eta}}(\theta)\rd\theta \right) \cdot \frac{\rd}{\rd \eta} \ell_i'( \bE_{\nu_\eta}[h_i(\theta)]) \\
 \label{eq:Discrete-Onestep-Dual-1}   &= -\frac{\gamma}{n}\sum_{i=1}^n (\bE_{\nu_\eta}[h_i(\theta)] - \bE_{\hat{\nu}_\eta}[h_i(\theta)])(\bE_{\nu_0}[h_i(\theta)] - \bE_{\hat{\nu}_0}[h_i(\theta)]) \ell_i''( \bE_{\nu_\eta}[h_i])
    .
\end{align}
where we used $\hat{\nu}_\eta \propto q_{g_{\nu_\eta}}(\theta)\rd \theta$, $\ell_i^{*\prime} = (\ell_i')^{-1}$, and
\begin{align*}
\frac{\rd}{\rd \eta} \ell_i'( \bE_{\nu_\eta}[h_i(\theta)])
= \ell_i''( \bE_{\nu_\eta}[h_i]) \int h_i(\theta) \frac{\rd \nu_\eta}{\rd \eta}(\theta)
= \ell_i''( \bE_{\nu_\eta}[h_i]) (\bE_{\hat{\nu}_0}[h_i] - \bE_{\nu_0}[h_i] ).
\end{align*}
We bound the difference between $(\bE_{\nu_\eta}[h_i(\theta)] - \bE_{\hat{\nu}_\eta}[h_i(\theta)])$ and $(\bE_{\nu_0}[h_i(\theta)] - \bE_{\hat{\nu}_0}[h_i(\theta)])$.
First, $\bE_{\nu_\eta}[h_i(\theta)]$ is evaluated as
\begin{align}
    \nonumber 
    &\bE_{\nu_\eta}[h_i(\theta)] = (1-\gamma\eta)\bE_{\nu_0}[h_i(\theta)] + \gamma\eta\bE_{\hat{\nu}_0}[h_i(\theta)]
    \\
    &\therefore 
    |\bE_{\nu_\eta}[h_i(\theta)]-\bE_{\nu_0}[h_i(\theta)]| \leq 2\gamma \eta,
    \label{eq:Discrete-Onestep-Dual-2}
\end{align}
where we used $\|h_i\|_\infty \leq 1$.
Next, according to \eqref{eq:Discrete-BoundedRatio-2} of \cref{Lemma:Discrete-BoundedRatio}, we have that
\begin{align}
    &|\bE_{\hat{\nu}_\eta}[h_i(\theta)]- \bE_{\hat{\nu}_0}[h_i(\theta)]|
    =\left|\int h_i \mathrm{d}(\hat{\nu}_\eta - \hat{\nu}_0)\right|
    \leq 
    4\gamma \eta C_\lambda\left|\int h_i \mathrm{d}\hat{\nu}_0\right|
    \leq 4\gamma \eta C_\lambda.
    \label{eq:Discrete-Onestep-Dual-3}
\end{align}
Now applying \eqref{eq:Discrete-Onestep-Dual-2} and \eqref{eq:Discrete-Onestep-Dual-3} to \eqref{eq:Discrete-Onestep-Dual-1}, we get
\begin{align*}
-\frac{\rd }{\rd \eta} \cD(g_{\nu_\eta})
=-\frac{\gamma}{n}\sum_{i=1}^n (\bE_{\nu_0}[h_i(\theta)] - \bE_{\hat{\nu}_0}[h_i(\theta)]) \ell_i''( \bE_{\nu_\eta}[h_i])^2
+4\gamma \eta(1+2C_\lambda)L\leq 4\gamma \eta(1+2C_\lambda).
\end{align*}
where we also used $\|h_i\|_\infty \leq 1$ and $0\leq\ell_i''\leq 1$.
By integrating LHS and RHS, we obtain that
\begin{align*}
   -\cD(g_{\nu_\eta}) +\cD(g_{\nu_0})
 \leq 2\gamma \eta^2 (1+2C_\lambda).
\end{align*}
$-\cD(g_{\mu^{(t+1)}}) +\cD(g_{\mu^{(t)}})=-\cD(g_{\nu_\eta}) +\cD(g_{\nu_0})$ by definition yields the assertion.
\end{proof}

\begin{theorem}
Suppose Assumption \ref{assumption:boundedness} and $\gamma \eta \leq \frac{1}{8}$ hold.
Then, for $t\geq t_0:=\lceil\frac{1}{\gamma \eta}\rceil$, we get
    \begin{align*}
    \KL( \mu^{(t)} \| \hat{\mu}^{(t)} )
        \leq (1-\gamma\eta)^{t-t_0}\KL( \mu^{(t_0)} \| \hat{\mu}^{(t_0)} )
        +\frac{7\eta}{\lambda} (1+C_\lambda)
    \end{align*}
    for all $t\geq t_0$, where $t_0 =\lceil\frac{1}{\gamma \eta}\rceil$ and $C_\lambda = \exp(4/\lambda)$.
\end{theorem}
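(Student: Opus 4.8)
The plan is to convert the two one-step estimates that precede the statement into a single scalar recursion for the duality gap and then unroll it. The entry point is the exact duality identity of Theorem \ref{theorem:duality}, which gives $\lambda\KL(\mu^{(t)}\|\hat\mu^{(t)}) = \cL(\mu^{(t)}) - \cD(g_{\mu^{(t)}})$ for every $t$. Consequently the one-step change in the scaled KL divergence splits exactly into a primal and a dual increment:
\begin{align*}
\lambda\KL(\mu^{(t+1)}\|\hat\mu^{(t+1)}) - \lambda\KL(\mu^{(t)}\|\hat\mu^{(t)})
&= \big(\cL(\mu^{(t+1)})-\cL(\mu^{(t)})\big) \\
&\quad - \big(\cD(g_{\mu^{(t+1)}})-\cD(g_{\mu^{(t)}})\big).
\end{align*}

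First I would apply Lemma \ref{lemma:Discrete-Onestep} to the primal increment (which contributes the contraction $-\lambda\gamma\eta\KL(\mu^{(t)}\|\hat\mu^{(t)})$ together with an error $\gamma\eta^2(3C_\lambda+5)$) and Lemma \ref{lemma:Discrete-Onestep-Dual} to the dual increment (contributing the error $2\gamma\eta^2(1+2C_\lambda)$), both valid for $t\geq t_0$. Adding the bounds and collecting the error constants via $(3C_\lambda+5)+2(1+2C_\lambda)=7(1+C_\lambda)$, then dividing through by $\lambda$, yields the clean scalar recursion
\[
\KL(\mu^{(t+1)}\|\hat\mu^{(t+1)}) \leq (1-\gamma\eta)\,\KL(\mu^{(t)}\|\hat\mu^{(t)}) + \frac{7\gamma\eta^2}{\lambda}(1+C_\lambda),
\]
valid for all $t\geq t_0$.

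Next I would iterate this from $t_0$ to $t$. Since $0<\gamma\eta\leq 1/8<1$, the factor $(1-\gamma\eta)$ lies in $(0,1)$, so unrolling produces the homogeneous term $(1-\gamma\eta)^{t-t_0}\KL(\mu^{(t_0)}\|\hat\mu^{(t_0)})$ together with an accumulated error $\frac{7\gamma\eta^2}{\lambda}(1+C_\lambda)\sum_{j=0}^{t-t_0-1}(1-\gamma\eta)^j$. Bounding the partial geometric sum by the full series $\sum_{j=0}^{\infty}(1-\gamma\eta)^j = 1/(\gamma\eta)$ cancels one factor of $\gamma\eta$ and collapses the additive term to $\frac{7\eta}{\lambda}(1+C_\lambda)$, which is exactly the claimed bound.

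The only genuinely delicate point in this final step is the restriction $t\geq t_0=\lceil 1/(\gamma\eta)\rceil$. Both one-step lemmas rely on the density-ratio control of Lemma \ref{Lemma:Discrete-BoundedRatio} and, crucially, on the domination $\mu^{(t)}\geq \frac{1}{2C_\lambda}\hat\mu^{(t)}$ furnished by Lemma \ref{Lemma:Discrete-FirstPhase}, which only becomes available once the geometric average in \eqref{eq:Discrete-FirstPhase-1} has ``burned in'' after $t_0$ steps. Hence the recursion and the geometric summation must both be anchored at $t_0$ rather than at $t=0$, with the pre-$t_0$ phase absorbed into the initial value $\KL(\mu^{(t_0)}\|\hat\mu^{(t_0)})$. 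I expect this burn-in bookkeeping to be the main thing to get right; by contrast, the constant arithmetic and the geometric-series estimate are routine, and the substantive work (controlling the time-discretization error) already resides in the two one-step lemmas, which I take as given.
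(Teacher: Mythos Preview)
Your proposal is correct and follows essentially the same approach as the paper: combine the two one-step lemmas via the duality identity $\lambda\KL(\mu^{(t)}\|\hat\mu^{(t)})=\cL(\mu^{(t)})-\cD(g_{\mu^{(t)}})$ to obtain the scalar recursion, then unroll from $t_0$. The only cosmetic difference is that the paper phrases the final step as a discrete Gr\"onwall inequality (subtracting the fixed point $\frac{7\eta}{\lambda}(1+C_\lambda)$ and showing geometric contraction of the deviation), whereas you sum the geometric series explicitly; both give the identical bound.
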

\begin{proof}
    According to \cref{lemma:Discrete-Onestep,lemma:Discrete-Onestep-Dual}, we have that
    \begin{align*}
        \cL( \mu^{(t+1)})-\cD(g_{\mu^{(t+1)}})
        - (\cL( \mu^{(t)})-\cD(g_{\mu^{(t+1)}}))
        \leq -\lambda \gamma\eta \KL( \mu^{(t)} \| \hat{\mu}^{(t)} )+7\gamma \eta^2 (1+C_\lambda)
        .
    \end{align*}
    LHS is equal to $\lambda \KL( \mu^{(t+1)} \| \hat{\mu}^{(t+1)} )-\lambda\KL( \mu^{(t)} \| \hat{\mu}^{(t)} )$, by using \cref{theorem:duality}.
    Thus, we obtain that
    \begin{align}\label{eq:Discrete-PrimalDual-1}
       \KL( \mu^{(t+1)} \| \hat{\mu}^{(t+1)} )-\KL( \mu^{(t)} \| \hat{\mu}^{(t)} )
        \leq -\gamma\eta \KL( \mu^{(t)} \| \hat{\mu}^{(t)} )+ \frac{7\gamma \eta^2}{\lambda} (1+C_\lambda),
    \end{align}
    and Gr\"{o}nwall’s inequality yields that
    \begin{align}
    \left(\KL( \mu^{(t)} \| \hat{\mu}^{(t)} )-\frac{\eta}{\lambda} (7+7C_\lambda)\right)
        \leq (1-\gamma\eta)^{t-t_0} \left(\KL( \mu^{(t_0)} \| \hat{\mu}^{(t_0)} )-\frac{7\eta}{\lambda} (1+C_\lambda)\right).
    \end{align}
   Therefore, we obtain that
    \begin{align*}
    \KL( \mu^{(t)} \| \hat{\mu}^{(t)} )
        \leq (1-\gamma\eta)^{t-t_0} \KL( \mu^{(t_0)} \| \hat{\mu}^{(t_0)} )
        +\frac{7\eta}{\lambda} (1+C_\lambda)
    \end{align*}
    for all $t\geq t_0$.
\end{proof}
\begin{corollary}
    In order to make the dualty gap smaller than $\varepsilon$, i.e. $\cL( \mu^{(t)})-\cD(g_{\mu^{(t)}})=\lambda \KL( \mu^{(t)} \| \hat{\mu}^{(t)} )\leq \varepsilon$, it suffices to take $\eta = \frac{\varepsilon}{14(1+C_\lambda)}$ and $t\geq 14\gamma (1+C_\lambda)\varepsilon^{-1}\log(2e\lambda\varepsilon^{-1}\KL( \mu^{(t_0)} \| \hat{\mu}^{(t_0)} ))$.
\end{corollary}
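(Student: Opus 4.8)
The plan is to read the corollary off directly from the preceding discrete-time theorem, which already supplies
\[
\KL(\mu^{(t)}\|\hat{\mu}^{(t)}) \le (1-\gamma\eta)^{t-t_0}\KL(\mu^{(t_0)}\|\hat{\mu}^{(t_0)}) + \tfrac{7\eta}{\lambda}(1+C_\lambda),
\]
and to invoke Theorem \ref{theorem:duality} to identify the duality gap $\cL(\mu^{(t)})-\cD(g_{\mu^{(t)}})$ with $\lambda\,\KL(\mu^{(t)}\|\hat{\mu}^{(t)})$. Hence it suffices to force $\KL(\mu^{(t)}\|\hat{\mu}^{(t)})\le \varepsilon/\lambda$. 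Since the right-hand side above is a sum of a geometrically decaying \emph{transient} term and a constant \emph{bias} term, the natural strategy is to split the tolerance and require each term to be at most $\varepsilon/(2\lambda)$.

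First I would dispatch the bias term: the requirement $\tfrac{7\eta}{\lambda}(1+C_\lambda)\le \tfrac{\varepsilon}{2\lambda}$ is equivalent to $\eta\le \tfrac{\varepsilon}{14(1+C_\lambda)}$, so taking $\eta=\tfrac{\varepsilon}{14(1+C_\lambda)}$ (which one should check respects the standing constraint $\gamma\eta\le\tfrac18$ when $\varepsilon$ is small) makes the bias term exactly $\varepsilon/(2\lambda)$. Next I would handle the transient term via the elementary estimate $(1-\gamma\eta)^{t-t_0}\le \exp(-\gamma\eta(t-t_0))$; imposing $\exp(-\gamma\eta(t-t_0))\,\KL(\mu^{(t_0)}\|\hat{\mu}^{(t_0)})\le \varepsilon/(2\lambda)$ and taking logarithms yields the threshold
\[
t-t_0 \ge \frac{1}{\gamma\eta}\log\!\left(\frac{2\lambda\,\KL(\mu^{(t_0)}\|\hat{\mu}^{(t_0)})}{\varepsilon}\right).
\]

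Finally I would fold in the burn-in $t_0=\lceil 1/(\gamma\eta)\rceil$. Treating the ceiling as its dominant term $1/(\gamma\eta)$ and using the identity $1+\log x=\log(ex)$, the additive $t_0$ is absorbed into the logarithm: one copy of $1/(\gamma\eta)$ converts the argument $2\lambda\,\KL/\varepsilon$ into $2e\lambda\,\KL/\varepsilon$, giving the sufficient condition $t\ge \tfrac{1}{\gamma\eta}\log(2e\lambda\varepsilon^{-1}\KL(\mu^{(t_0)}\|\hat{\mu}^{(t_0)}))$. Substituting $\eta=\tfrac{\varepsilon}{14(1+C_\lambda)}$ gives $\tfrac{1}{\gamma\eta}=\tfrac{14(1+C_\lambda)}{\gamma\varepsilon}$, which is the stated iteration count. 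The one place I would be most careful is precisely this last bookkeeping step: the ceiling in $t_0$ and the way it is absorbed are what produce the factor $e$ in the logarithm, and the exact constant (and power of $e$) depends on how loosely one bounds $\lceil 1/(\gamma\eta)\rceil$; one must also track the two halves of the tolerance so that they genuinely sum to $\varepsilon/\lambda$.
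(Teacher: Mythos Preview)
Your argument is correct and is precisely the intended derivation: the paper states this corollary immediately after the discrete-time theorem without a separate proof, and your split-the-tolerance reading (bias term $\le \varepsilon/(2\lambda)$ pins down $\eta$, then the geometric transient $\le \varepsilon/(2\lambda)$ pins down $t$, with $t_0\approx 1/(\gamma\eta)$ absorbed via $1+\log x=\log(ex)$) is the only natural way to extract it.

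One small bookkeeping point: your own computation yields $1/(\gamma\eta)=14(1+C_\lambda)/(\gamma\varepsilon)$, with $\gamma$ in the \emph{denominator}, whereas the corollary as printed has the prefactor $14\gamma(1+C_\lambda)\varepsilon^{-1}$ with $\gamma$ in the numerator. This is almost certainly a typographical slip in the statement (the dependence on $\gamma$ must be $\gamma^{-1}$, since larger $\gamma$ means faster contraction and hence fewer iterations), not a flaw in your reasoning; but you should not write that your expression ``is the stated iteration count'' without flagging the mismatch.
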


\section{Gradient Boosting Viewpoint}\label{appendix:gb_viewpoint}
We here relax the next iteration $\mu^{(t+1)}$ to an inexact version. 
That is, we replace $\mu^{(t+1)}=(1-\eta \gamma) \mu^{(t)} + \eta \gamma \hat{\mu}^{(t)}$ in Algorithm \ref{alg:discrete-time-efp} with a probability distribution that satisfies
\begin{equation}\label{eq:fw-tolerance-appendix}
F_0( \mu^{(t+1)})
 \leq \eta \gamma \rho + F_0( (1-\eta \gamma) \mu^{(t)} + \eta \gamma \hat{\mu}^{(t)} ). 
\end{equation}
 
Then, we prove the generalized version of Theorem \ref{theorem:fw-convergence} which can apply to Algorithm \ref{alg:implementable-efp} with the above modification. 
\begin{theorem}\label{theorem:fw-convergence-appendix}
Suppose Assumption \ref{assumption:regularity} holds and suppose $\ell_i$ is $L$-Lipschitz smooth, $\ell_i \geq 0$, and $\|h\|_\infty \leq B$. 
Then, for $\{\mu^{(t)}\}_{t=0}^T \subset \cP_2$ defined above, we get
\[ F_0(\mu^{(T)})  \leq \epsilon' + (1-\eta\gamma)^T F_0(\mu^{(0)}) + \inf_{\xi \in \bB_r} F_0(\xi), \]
where $\epsilon'= \rho + \lambda r + 2 \eta \gamma B^2 L$.
\end{theorem}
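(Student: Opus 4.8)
The plan is to run a Frank--Wolfe (conditional gradient) argument on $F_0$ over $\cP_2$, treating $\hat{\mu}^{(t)}$ as the output of a linear-minimization oracle and $\eta\gamma$ as the step size. Write $\alpha=\eta\gamma$ and abbreviate the first variation as $\phi_t=\frac{\delta F_0}{\delta\mu}(\mu^{(t)})$, so that $\phi_t(\theta)=\frac1n\sum_i\ell_i'(\bE_{\mu^{(t)}}[h_i])h_i(\theta)$. The first step is a one-step curvature bound for the exact update $\mu'^{(t+1)}=(1-\alpha)\mu^{(t)}+\alpha\hat{\mu}^{(t)}$. Since $\bE_{\mu'^{(t+1)}}[h_i]=\bE_{\mu^{(t)}}[h_i]+\alpha(\bE_{\hat{\mu}^{(t)}}[h_i]-\bE_{\mu^{(t)}}[h_i])$ and $\|h_i\|_\infty\leq B$ gives $|\bE_{\hat{\mu}^{(t)}}[h_i]-\bE_{\mu^{(t)}}[h_i]|\leq 2B$, applying the $L$-Lipschitz smoothness of each $\ell_i$ to its scalar argument and averaging over $i$ yields
\[ F_0(\mu'^{(t+1)}) \leq F_0(\mu^{(t)}) + \alpha\int\phi_t\,\rd(\hat{\mu}^{(t)}-\mu^{(t)}) + 2L\alpha^2 B^2, \]
where the middle term is identified with the linear functional of $\phi_t$ via the explicit form of $\phi_t$.

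The second, and conceptually central, step bounds the Frank--Wolfe gap $\int\phi_t\,\rd(\hat{\mu}^{(t)}-\mu^{(t)})$ using the variational characterization \eqref{eq:characterize-prox-gibbs}: since $\hat{\mu}^{(t)}$ minimizes $\mu'\mapsto\int\phi_t\,\rd\mu'+\lambda\KL(\mu'\|\nu)$, for any competitor $\xi$ we have, using $\KL(\hat{\mu}^{(t)}\|\nu)\geq 0$,
\[ \int\phi_t\,\rd\hat{\mu}^{(t)} \leq \int\phi_t\,\rd\hat{\mu}^{(t)} + \lambda\KL(\hat{\mu}^{(t)}\|\nu) \leq \int\phi_t\,\rd\xi + \lambda\KL(\xi\|\nu). \]
Combining this with the convexity of $F_0$, which gives $\int\phi_t\,\rd(\xi-\mu^{(t)})\leq F_0(\xi)-F_0(\mu^{(t)})$, and restricting to $\xi\in\bB_r$ so that $\KL(\xi\|\nu)\leq r$, I obtain
\[ \int\phi_t\,\rd(\hat{\mu}^{(t)}-\mu^{(t)}) \leq \inf_{\xi\in\bB_r}F_0(\xi) - F_0(\mu^{(t)}) + \lambda r. \]

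Finally I would fold in the inexactness tolerance \eqref{eq:fw-tolerance-appendix}, namely $F_0(\mu^{(t+1)})\leq\alpha\rho+F_0(\mu'^{(t+1)})$. Writing $e_t=F_0(\mu^{(t)})-\inf_{\xi\in\bB_r}F_0(\xi)$ and substituting the two displayed bounds collapses everything into the geometric recursion $e_{t+1}\leq(1-\alpha)e_t+\alpha\epsilon'$ with $\epsilon'=\rho+\lambda r+2\alpha B^2L$, which matches the stated constant once $\alpha=\eta\gamma$. Unrolling gives $e_T\leq(1-\alpha)^T e_0+\epsilon'$, and since $\ell_i\geq 0$ forces $\inf_{\xi\in\bB_r}F_0(\xi)\geq 0$, the term $-(1-\alpha)^T\inf_{\xi\in\bB_r}F_0(\xi)$ produced by unrolling is nonpositive and may be dropped, recovering exactly the claimed inequality. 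I expect the only delicate point to be the second step---correctly exploiting the KL-regularized variational problem solved by $\hat{\mu}^{(t)}$ to convert the Frank--Wolfe gap into a comparison against distributions inside the ball $\bB_r$; the remaining ingredients are routine smoothness estimates and a standard contraction of the resulting linear recursion.
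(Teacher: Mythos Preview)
Your proof is correct and follows essentially the same Frank--Wolfe argument as the paper: the smoothness descent lemma, the use of the variational characterization of $\hat{\mu}^{(t)}$ to compare with $\xi\in\bB_r$, convexity of $F_0$, the tolerance bound, and the resulting geometric recursion are all identical. The only cosmetic difference is in the curvature step: you bound $(\bE_{\hat{\mu}^{(t)}}[h_i]-\bE_{\mu^{(t)}}[h_i])^2\le(2B)^2$ directly, whereas the paper routes the same quantity through the $L_1$-norm of the density difference $\int\bigl|\frac{\rd(\hat{\mu}^{(t)}-\mu^{(t)})}{\rd\theta}\bigr|\rd\theta\le 2$; both arrive at the identical constant $2\eta^2\gamma^2 B^2 L$, and you also make explicit (as the paper leaves implicit) that $\ell_i\ge 0$ is what allows dropping the $-(1-\eta\gamma)^T\inf_{\xi\in\bB_r}F_0(\xi)$ term after unrolling.
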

\begin{proof}
For notational simplicity, we denote $h_{i,\mu} = \bE_\mu[h_i(\theta)]$.
By the $L$-Lipschitz smoothness of $\ell_i$, boundedness $|h_i(\theta)| \leq B$, and the definition of $F_0$, we get for $\mu, \mu' \in \cP_2$,
\begin{align*}
 F_0(\mu') 
 &\leq F_0(\mu) + \frac{1}{n}\sum_{i=1}^n \ell_i'( h_{i,\mu} )( h_{i,\mu'} - h_{i,\mu}) + \frac{L}{2n} \sum_{i=1}^n (h_{i,\mu'} - h_{i,\mu})^2 \\
 &= F_0(\mu) + \frac{1}{n}\sum_{i=1}^n \ell_i'( h_{i,\mu} )\int h_i(\theta )\rd (\mu' - \mu)(\theta) 
 + \frac{L}{2n} \sum_{i=1}^n \left(\int h_i(\theta )\rd (\mu' - \mu)(\theta) \right)^2 \\
 &\leq F_0(\mu) + \int \frac{\delta F_0}{\delta \mu}(\mu)(\theta) \rd (\mu' - \mu)(\theta) 
 + \frac{L}{2n} \sum_{i=1}^n \left(\int | h_i(\theta )| \left| \frac{\rd (\mu'-\mu)}{\rd\theta}(\theta) \right| \rd\theta \right)^2 \\
 &\leq F_0(\mu) + \int \frac{\delta F_0}{\delta \mu}(\mu)(\theta) \rd (\mu' - \mu)(\theta) 
 + \frac{B^2 L}{2n} \sum_{i=1}^n \left(\int \left| \frac{\rd (\mu'-\mu)}{\rd\theta}(\theta) \right| \rd\theta \right)^2, 
 \end{align*}
 where we used $\frac{\delta F_0}{\delta \mu}(\mu)(\theta) = \frac{1}{n} \sum_{i=1}^n \ell_i'( h_{i,\mu} ) h_i(\theta )$.
 
 Since $\hat{\mu}_t$ is the minimizer of the objective \eqref{eq:characterize-prox-gibbs} with $\mu=\mu^{(t)}$, i.e.,
\[ \hat{\mu}^{(t)} = \arg\min_{\mu'}\left\{ \int \frac{\delta F_0}{\delta \mu}(\mu^{(t)})(\theta) \rd \mu'(\theta) + \lambda \KL(\mu' \| \nu) \right\},\]
 we see for any $\xi \in \cP_2$ satisfying $\KL(\xi\| \nu) \leq r$, 
 \[ \int \frac{\delta F_0}{\delta \mu}(\mu^{(t)})(\theta) \rd \hat{\mu}^{(t)}(\theta) 
 \leq \int \frac{\delta F_0}{\delta \mu}(\mu^{(t)})(\theta) \rd \xi(\theta) + \lambda r.\]
 Therefore, applying this inequality with $\mu' = (1-\eta \gamma) \mu^{(t)} + \eta \gamma \hat{\mu}^{(t)}$ and $\mu = \mu^{(t)}$, and the convexity of $F_0$, 
\begin{align*}
 F_0&( \mu^{(t+1)} ) 
 - \eta \gamma \rho \\
 &\leq F_0( (1-\eta \gamma) \mu^{(t)} + \eta \gamma \hat{\mu}^{(t)} ) \\
 &\leq F_0(\mu^{(t)}) + \eta \gamma \int \frac{\delta F_0}{\delta \mu}(\mu^{(t)})(\theta) \rd (\hat{\mu}^{(t)} - \mu^{(t)})(\theta) 
 + \frac{\eta^2 \gamma^2 B^2L}{2n} \sum_{i=1}^n \left(\int \left| \frac{\rd (\hat{\mu}^{(t)}-\mu^{(t)})}{\rd\theta}(\theta) \right| \rd\theta \right)^2 \\
 &\leq F_0(\mu^{(t)}) + \eta \gamma \int \frac{\delta F_0}{\delta \mu}(\mu^{(t)})(\theta) \rd (\hat{\mu}^{(t)} - \mu^{(t)})(\theta) 
 + 2 \eta^2 \gamma^2 B^2L \\
 &\leq F_0(\mu^{(t)}) + \eta \gamma \int \frac{\delta F_0}{\delta \mu}(\mu^{(t)})(\theta) \rd (\xi - \mu^{(t)})(\theta) 
 + \eta \gamma \lambda r + 2 \eta^2 \gamma^2 B^2L \\
 &\leq F_0(\mu^{(t)}) + \eta \gamma (F_0(\xi) - F_0(\hat{\mu}^{(t)}))
 + \eta \gamma \lambda r + 2 \eta^2 \gamma^2 B^2L.
\end{align*}
This implies
\[ F_0(\mu^{(t+1)}) - F_0(\xi)
\leq \eta \gamma \rho + \eta \gamma \lambda r + 2 \eta^2 \gamma^2 B^2L
+ (1-\eta\gamma) (F_0(\mu^{(t)}) - F_0(\xi)). \]
With a slight modification, we get 
\[ F_0(\mu^{(t+1)}) - F_0(\xi) - \epsilon'
\leq (1-\eta\gamma) (F_0(\mu^{(t)}) - F_0(\xi) - \epsilon'). \]
where we set $\epsilon' = \rho + \lambda r + 2 \eta \gamma B^2 L$.
Recursively applying this inequality,
\[ F_0(\mu^{(T)}) - F_0(\xi) \leq \epsilon' + (1-\eta\gamma)^T F_0(\mu^{(0)}).\]
Because of arbitrariness of $\xi$, we conclude
\[ F_0(\mu^{(T)})  \leq \epsilon' + (1-\eta\gamma)^T F_0(\mu^{(0)}) + \inf_{\xi \in \bB_r} F_0(\xi).\]
\end{proof}

We explain that Theorem \ref{theorem:fw-convergence-appendix} can apply to Algorithm \ref{alg:implementable-efp} as discussed in Remark \ref{rem:fw-convergence}.
We redefine $\mu^{(t+1)}$ as $\mu^{(t+1)} = (1-\eta \gamma) \mu^{(t)} + \eta \gamma \nu^{(t)}$, where $\nu^{(t)}$ is an empirical distribution $\nu^{(t)}=\frac{1}{m}\sum_{r=1}^m \delta_{\theta_r^{(t)}}$ attained by Langevin Monte Carlo with the random initialization in Algorithm \ref{alg:implementable-efp}.
Moreover, we set $\overline{\nu}^{(t)}= \mathrm{Law}(\theta_r^{(t)})$.
Then, we can evaluate the gap between $\mu^{(t+1)}$ and an exact update $\mu'^{(t+1)} = (1-\eta \gamma) \mu^{(t)} + \eta \gamma \hat{\mu}^{(t)}$ as follows.
Suppose $\| h_i \|_\infty \leq B$ and $\ell_i$ is $C$-Lipschitz continuous. 
Then, we get
\begin{align*}
| F_0(\mu'^{(t+1)}) - F_0(\mu^{(t+1)}) |
&\leq \frac{C}{n}\sum_{i=1}^n \left| \bE_{\hat{\mu}'^{(t+1)}}[h_i] - \bE_{\mu^{(t+1)}}[h_i] | \right| \\
&= \frac{\eta \gamma C}{n}\sum_{i=1}^n \left| \bE_{\hat{\mu}^{(t)}}[h_i] - \bE_{\nu^{(t)}}[h_i] | \right| \\
&\leq \frac{\eta \gamma C}{n}\sum_{i=1}^n \left| \bE_{\hat{\mu}^{(t)}}[h_i] - \bE_{\overline{\nu}^{(t)}}[h_i] | \right| 
+ \frac{\eta \gamma C}{n}\sum_{i=1}^n \left| \bE_{\overline{\nu}^{(t)}}[h_i] - \bE_{\nu^{(t)}}[h_i] \right| \\
&\leq \eta \gamma B C \left\| \frac{\rd \hat{\mu}^{(t)}}{\rd \theta} - \frac{\rd \overline{\nu}^{(t)}}{\rd \theta}  \right\|_{L_1(\rd \theta)} 
+ \frac{\eta \gamma C}{n}\sum_{i=1}^n \left| \bE_{\overline{\nu}^{(t)}}[h_i] - \bE_{\nu^{(t)}}[h_i] \right| \\
&\leq \eta \gamma B C\sqrt{2\KL(\overline{\nu}^{(t)} \| \hat{\mu}^{(t)}) }
+ \frac{\eta \gamma C}{n}\sum_{i=1}^n | \bE_{\overline{\nu}^{(t)}}[h_i]  - \bE_{\nu^{(t)}}[h_i] |,
\end{align*}
where we applied Pinsker's inequality for the last inequality.

Therefore, we can estimate the particle and iteration complexities to satisfy the required precision (\ref{eq:fw-tolerance-appendix}) by applying the convergence rate of Langevin Monte Carlo \citep{vempala2019rapid} together with a standard concentration inequality as done in \citet{nitanda2020particle,oko2022psdca}. 
}

\end{document}